\def\eqref#1{equation~\ref{#1}}
\def\1{\bm{1}}
\def\vh{{\bm{h}}}
\def\vx{{\bm{x}}}
\def\mA{{\bm{A}}}
\def\mB{{\bm{B}}}
\def\mC{{\bm{C}}}
\def\mG{{\bm{G}}}
\def\mI{{\bm{I}}}
\def\mQ{{\bm{Q}}}
\def\mR{{\bm{R}}}
\def\mU{{\bm{U}}}
\def\mV{{\bm{V}}}
\def\mW{{\bm{W}}}
\DeclareMathAlphabet{\mathsfit}{\encodingdefault}{\sfdefault}{m}{sl}
\SetMathAlphabet{\mathsfit}{bold}{\encodingdefault}{\sfdefault}{bx}{n}
\newcommand{\R}{\mathbb{R}}
\definecolor{darkgreen}{RGB}{0,192,0}   
\definecolor{darkred}{RGB}{192,0,0} 
\definecolor{lightgreen}{RGB}{226,239,218}
\definecolor{lightorange}{RGB}{255,242,204}
\newtheorem{theorem}{Theorem}[section]
\title{Efficient Orthogonal Fine-Tuning with \\ Principal Subspace Adaptation}
\author{
Fei Wu, 
Jia Hu\thanks{Corresponding author.}, 
Geyong Min\footnotemark[1], 
Shiqiang Wang \\
Department of Computer Science, University of Exeter, UK \\
\texttt{\{fw407,j.hu,g.min,s.wang9\}@exeter.ac.uk} \\
}
\begin{document}

\maketitle

\begin{abstract}
Driven by the rapid growth of model parameters, parameter-efficient fine-tuning (PEFT) has become essential for adapting large models to diverse downstream tasks under constrained computational resources. Within this paradigm, orthogonal fine-tuning and its variants preserve semantic representations of pre-trained models, but struggle to achieve both expressiveness and efficiency in terms of parameter counts, memory, and computation. To overcome this limitation, we propose efficient Orthogonal Fine-Tuning with Principal Subspace adaptation (PSOFT), which confines orthogonal transformations to the principal subspace of pre-trained weights. Specifically, PSOFT constructs this subspace via matrix decomposition to enable compatible transformations, establishes a theoretical condition that strictly maintains the geometry of this subspace for essential semantic preservation, and introduces efficient tunable vectors that gradually relax orthogonality during training to enhance adaptability. Extensive experiments on 35 NLP and CV tasks across four representative models demonstrate that PSOFT offers a practical and scalable solution to simultaneously achieve semantic preservation, expressiveness, and multi-dimensional efficiency in PEFT. The code is publicly available at \url{https://github.com/fei407/PSOFT}.
\end{abstract}

\section{Introduction}
\label{sec:introduction}
% Introduction
% Page 1 - 2

Pre-trained foundation models including large language models (LLMs)~\citep{grattafiori2024llama} and vision transformers (ViT)~\citep{dosovitskiy2021image} have transformed natural language processing (NLP)~\citep{qin2023chatgpt} and computer vision (CV)~\citep{liu2023visual}. This success is attributed to emergent abilities \citep{wei2022emergent} that arise as these models are scaled up.
However, their ever-growing scale poses a practical barrier to efficiently tailoring (\emph{i.e.,} fine-tuning) these sophisticated foundation models to specific downstream tasks. 
To address this challenge, parameter-efficient fine-tuning (PEFT) has emerged as a promising paradigm that adapts models by updating only a minimal subset of parameters~\citep{houlsby2019parameter, lester2021power, li2021prefix, hu2022lora, meng2024pissa, liu2024dora}. 
\begin{wrapfigure}{r}{0.5\textwidth}
\centering
% \vskip -0.1in
\includegraphics[width=0.9\linewidth]{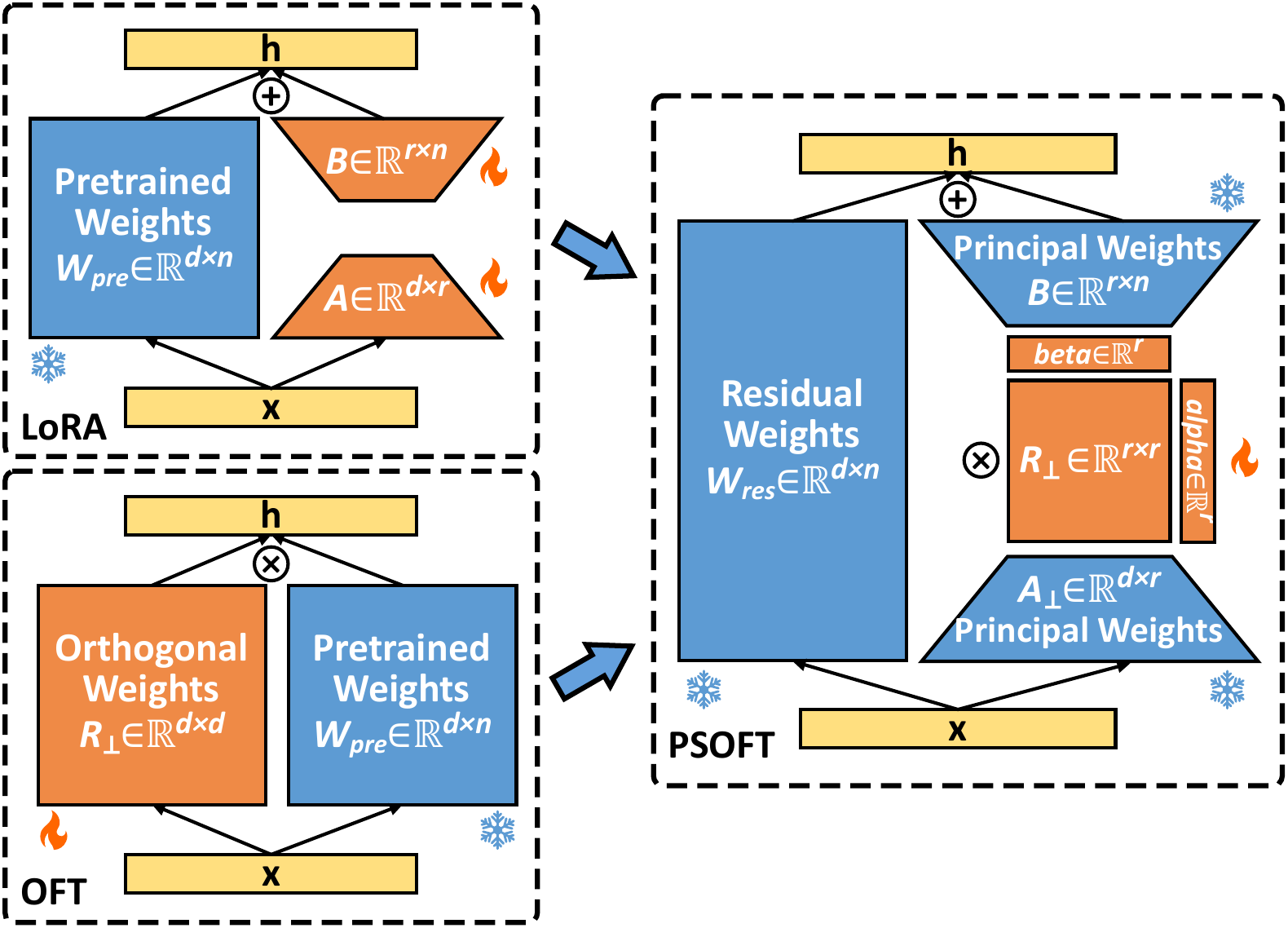}
\caption{Overview of the architectures of LoRA, OFT, and the proposed PSOFT.}
\label{fig:arch}
\vskip -0.3in
\end{wrapfigure}
Among PEFT studies, reparameterization-based methods~\citep{hu2022lora, qiu2023controlling} are widely adopted because they seamlessly integrate with pre-trained weights without adding inference latency.

As illustrated in the left panel of Figure~\ref{fig:arch}, reparameterization-based methods include Low-Rank Adaptation (LoRA)~\citep{hu2022lora} and Orthogonal Fine-Tuning (OFT)~\citep{liu2021orthogonal, qiu2023controlling}. 
LoRA has been widely adopted for its efficient low-rank structure, but it may distort semantic representations embedded in the pre-trained weights. These semantic representations can be understood as the geometric structure of weight vectors, specifically the pairwise angles and norms among columns, which encode relational information learned during pre-training. Distorting this structure may weaken the model’s ability to transfer knowledge to downstream tasks~\citep{wang2023orthogonal}. 
In contrast, OFT applies isometric orthogonal transformations, which strictly maintain this geometric structure and thereby preserve semantic representations. However, full-dimensional orthogonal transformations are inefficient in terms of parameter counts, memory, and computation, rendering them impractical for large-scale applications.

This contrast leaves a gap in PEFT between the efficiency of LoRA and the semantic preservation of OFT. Building on OFT's advantages, several studies have explored ways to improve its efficiency while retaining its core strength. 
Early attempts such as block-diagonal OFT~\citep{qiu2023controlling} reduced parameter counts and partially alleviated computational and memory overhead through block-diagonal sparsity.
However, the rigid block structure restricts the model’s expressiveness (its ability to capture diverse transformations) and consequently limits the performance that can be empirically attained.
To address this limitation, later variants such as BOFT~\citep{liu2024parameter} and qGOFT~\citep{ma2024parameter} have sought to restore expressiveness while maintaining parameter efficiency by composing multiple sparse orthogonal matrices in sequence. Yet this design incurs a new drawback: chaining multiple sparse matrices introduces substantial intermediate states that dominate runtime and memory consumption.
Empirically, qGOFT has been reported to run nearly $6\times$ slower than LoRA during training~\citep{ma2024parameter}, while BOFT and qGOFT frequently consume more than 80 GB of memory in large-scale model settings. Such overhead inflates training costs and undermines their practicality.
Thus, sparsity-driven OFT variants struggle to achieve both expressiveness and efficiency across multiple dimensions. This tension underlies the central challenge of our work: 

\textit{How to design a PEFT method that simultaneously achieves semantic preservation, expressiveness, and multi-dimensional efficiency (parameter counts, memory, and computation)?}

\begin{table}[t]
\vskip -0.2in
\caption{Comparison of LoRA, OFT variants, and the proposed PSOFT. The table summarizes the trade-off among semantic preservation, multi-dimensional efficiency, and expressiveness (as reflected in performance) across PEFT methods.}
\label{table:intro}
\centering
\begin{threeparttable}
\setlength{\tabcolsep}{1pt} 
\renewcommand{\arraystretch}{1} 
\resizebox{1.0\textwidth}{!}{
\begin{tabular}{cccccc}
\toprule
\textbf{Methods} & \textbf{ \makecell{Semantic Representations \\ (explicitly preserved)}} & \textbf{ \makecell {Parameter-efficiency \\ Mechanism}} & \textbf{\makecell{Memory \\ Usage}} & \textbf{\makecell{Computational \\ Overhead}} & \textbf{Performance}\\
\midrule
LoRA~\citep{hu2022lora}                                         & \ding{55}             & Low-rank              & Low & Low & Medium  \\
Full OFT~\citep{liu2021orthogonal}                              & Full space            & \ding{55}                                   & Very High & Very High & High \\
Block-diagonal OFT~\citep{qiu2023controlling}                   & Full space            & Block-diagonal            & Medium & Medium & Medium-High  \\
\makecell{BOFT~\citep{liu2024parameter} \\ \& qGOFT~\citep{ma2024parameter}}    & Full space            & \makecell{Butterfly factorization \\/ Givens rotation }  & High       & High & High \\
\rowcolor{lightgreen}
PSOFT (Ours)                                                     & Principal subspace    & Low-rank              & Low \textcolor{darkgreen}{$\downarrow$} & Low \textcolor{darkgreen}{$\downarrow$} & High \textcolor{darkred}{$\uparrow$} \\
\bottomrule
\end{tabular}
}
% \begin{tablenotes}[para，flushleft]
% \item[*] \footnotesize PSOFT attains a higher rank than LoRA with the same number of parameters, since it freezes the \\ projections and assigns all trainable parameters to the orthogonal matrix.
% \end{tablenotes}
\end{threeparttable}
% \vskip -0.2in
\end{table}

To address this challenge, motivated by evidence that both pre-trained models and their task-specific adaptations reside in a low intrinsic rank~\citep{li2018measuring, aghajanyan2021intrinsic, hu2022lora}, we propose efficient \textbf{O}rthogonal \textbf{F}ine-\textbf{T}uning with \textbf{P}rincipal \textbf{S}ubspace adaptation (\textbf{PSOFT}), as illustrated in the right panel of Figure~\ref{fig:arch}. The key idea is to confine orthogonal transformations to the low-rank principal subspace of pre-trained weights, thereby overcoming the limitations of conventional OFT operating in the full parameter space and simultaneously achieving semantic preservation, expressiveness, and multi-dimensional efficiency.

However, realizing this idea is non-trivial, as it entails overcoming several technical difficulties:
1) \textbf{Compatibility.} A low-dimensional orthogonal transformation cannot be directly applied to the high-dimensional weight matrix, leading to dimensional incompatibility with the pre-trained model.
2) \textbf{Geometry preservation.} Naively applying low-rank orthogonal transformations may distort the geometry of the subspace, thereby undermining the strict preservation of essential semantic representations.
3) \textbf{Adaptability.} Strict orthogonality constraints may hinder adaptation to slight task-specific drifts, resulting in suboptimal performance on downstream tasks.

PSOFT resolves these difficulties through principled designs.
First, it constructs a principal subspace of pre-trained weights through matrix decomposition, enabling compatible orthogonal transformations and yielding a higher rank that enhances expressiveness.
Next, it establishes a theoretical condition to strictly maintain the geometry of the subspace, thereby ensuring essential semantic preservation.
Finally, it introduces efficient tunable vectors to gradually relax orthogonality during training at negligible cost, improving adaptability across diverse downstream tasks.

We evaluate PSOFT through extensive experiments on 35 NLP and CV tasks with four representative pre-trained models.
Compared with OFT variants, PSOFT consistently avoids out-of-memory (OOM) failures and accelerates training.
On small-scale models, it achieves up to $18\times$ higher parameter efficiency with the lowest memory footprint among baselines, without compromising average performance.
On larger models, PSOFT lowers the memory footprint of OFT to a level
comparable with LoRA-like methods while outperforming LoRA on GSM-8K (+2.3\%) and Commonsense Reasoning (+1.4\%) with comparable parameter counts.
As summarized in Table~\ref{table:intro}, PSOFT preserves semantic representation in the principal subspace while minimizing parameter counts, memory, and computation overhead, and simultaneously maintains expressiveness as reflected in high performance. 

The main contributions of this work are summarized as follows:
\begin{itemize}[leftmargin=*, itemsep=0pt, topsep=0pt]

\item We introduce a new low-rank perspective that unifies efficiency and expressiveness in OFT, bridging the gap between low-rank adaptation and orthogonal fine-tuning.  

\item We establish a theoretical condition under which low-dimensional orthogonal fine-tuning strictly preserves the geometric structure of the subspace. 

\item We propose PSOFT, a framework that confines OFT to the principal subspace with theoretical guarantees and practical adaptability. 

\item We validate PSOFT through extensive experiments, establishing a practical and scalable solution to simultaneously achieve semantic preservation, expressiveness, and multi-dimensional efficiency. 
\end{itemize}

\section{Related Work}
\label{sec:related_work}
% Related Work
% Page 0.5
\textbf{Parameter-Efficient Fine-Tuning (PEFT).} 
PEFT adapts pre-trained models to diverse downstream tasks by fine-tuning only a small subset of parameters. Specifically, existing PEFT methods fall into three categories:
\textbf{1) Selection-based} methods select specific components of the pre-trained model without altering its architecture~\citep{zaken2022bitfit, song2024sparse, xu2024random}. 
\textbf{2) Addition-based} methods insert \textit{prompts} or \textit{adapters} at the input or within Transformer blocks~\citep{houlsby2019parameter, pfeiffer2021adapterfusion, lester2021power, li2021prefix, liu2022ptuning}. 
\textbf{3) Reparameterization-based} methods reparameterize weights in parallel with minimal parameters~\citep{hu2022lora, azizi2024lamda, balazy2024lora-xs, gao2024dft, kopiczko2024vera, lingam2024svft, liu2024dora, meng2024pissa}. 
Reparameterization-based methods are particularly appealing since they incur no additional inference latency, with representative examples including LoRA~\citep{hu2022lora} and OFT~\citep{qiu2023controlling}.
LoRA's variants, such as PiSSA~\citep{meng2024pissa} and DoRA~\citep{liu2024dora}, improve convergence through re-initialization and enhance performance via weight decomposition, respectively. DoRA decomposes the low-rank update into direction and magnitude components, but it may introduce additional memory and computational overhead for computing these components. In addition, LaMDA~\citep{azizi2024lamda} and LoRA-XS~\citep{balazy2024lora-xs} reduce the parameter count and resource usage of LoRA by employing more compact matrices. In LoRA-XS, the learnable square matrix is constrained by the fixed LoRA matrices, which may limit its expressiveness. However, these LoRA-based methods may induce semantic drift from the pre-trained representations~\citep{wang2023orthogonal}, which may degrade output quality in generative tasks.

\textbf{Orthogonal Fine-Tuning (OFT).} 
Unlike additive methods such as LoRA, multiplicative OFT preserves semantic representations of pre-trained models through orthogonal transformations, which maintains the \textit{hyperspherical energy} among neurons~\citep{liu2021orthogonal, qiu2023controlling}. To mitigate the prohibitive cost of applying orthogonal transformations over the full parameter space, prior studies typically introduce sparsity constraints. For instance,  block-diagonal OFT~\citep{qiu2023controlling} adopts a block-diagonal sparse structure to reduce parameter counts, though at the risk of undesired inductive biases~\citep{liu2024parameter}. BOFT~\citep{liu2024parameter} and qGOFT~\citep{ma2024parameter} address this issue by replacing dense matrices with sequences of sparse multiplications, thereby improving parameter efficiency while restoring expressiveness. Nevertheless, these variants remain less efficient in memory and computation than LoRA and its variants. 
In parallel, Adapter\(^R\)~\citep{zhang2024spectral} rotates the top spectral space using orthogonal transformations to preserve spectral characteristics of pretrained weights, in contrast to the geometric structure emphasized in OFT. Overall, existing OFT variants struggle to achieve both expressiveness and efficiency across multiple dimensions. 

These limitations motivate our PSOFT algorithm, which confines orthogonal transformations to the principal subspace with a theoretical guarantee of preserving essential semantic representations, followed by a relaxation of strict orthogonality at negligible cost to enhance adaptability.

%% Please note that we have introduced automatic line number generation
%% into the style file for \LaTeXe. This is to help reviewers
%% refer to specific lines of the paper when they make their comments. Please do
%% NOT refer to these line numbers in your paper as they will be removed from the
%% style file for the final version of accepted papers.

\section{Preliminaries}
\label{sec:preliminaries}
% Preliminaries
% Page 0.5
In this section, we formalize LoRA and OFT variants in mathematical notation, providing a unified view of their parameterization strategies. 

Conventional full fine-tuning (FFT) updates the entire pre-trained weight matrix 
$\mW_{\text{pre}} \in \R^{d \times n}$ to obtain $\mW$, whereas PEFT methods freeze $\mW_{\text{pre}}$ and introduce only a small set of trainable parameters. 
For LoRA~\citep{hu2022lora}, the update is parameterized by a low-rank decomposition:
\begin{equation} 
\vh =  \mW^{\top} \vx = (\mW_{\text{pre}} + \mA \mB)^{\top} \vx, 
\quad \text{s.t.} \quad \operatorname{rank}(\mA\mB) = r,
\label{lora}
\end{equation}
where $\mA \in \mathbb{R}^{d \times r}$ and $\mB \in \mathbb{R}^{r \times n}$ are trainable matrices. 
Following standard practice, $\mA$ is initialized with Kaiming initialization~\citep{he2015delving} and $\mB$ with zeros, so training begins from $\mW_{\text{pre}}$.

For OFT~\citep{liu2021orthogonal, qiu2023controlling}, the update is parameterized by an orthogonal matrix $\mR$, which fine-tunes $\mW_{\text{pre}}$ in the full parameter space, \emph{i.e.,} $\mW_{\text{fs-tuned}} = \mR \mW_{\text{pre}}$. The forward pass is given by:
\begin{equation}
\vh = \mW^{\top}_{\text{fs-tuned}} \vx = ( \mR \mW_{\text{pre}} )^{\top} \vx , 
\quad \text{s.t.} \quad \mR^{\top} \mR = \mR \mR^{\top} = \mI_d, 
\label{oft}
\end{equation}
where $\mR \in \R^{d \times d}$ is initialized as the identity matrix so that training begins from $\mW_{\text{pre}}$. By construction, orthogonal transformations in the full parameter space preserve both angles and norms, thereby maintaining the geometric structure of $\mW_{\text{pre}}$.

To reduce parameter overhead, block-diagonal OFT~\citep{qiu2023controlling} constrains $\mR$ to a block-diagonal form 
$\mR = \text{diag}(\mR_1, \cdots, \mR_i, \cdots, \mR_{d/r})$, where each $\mR_i \in \mathrm{O}(d/r)$. 
Although efficient, this structure may introduce undesirable inductive bias. 
BOFT~\citep{liu2024parameter} and qGOFT~\citep{ma2024parameter} mitigate this by factorizing $\mR$ into sparse matrices, 
$\mR = \prod_{m=1}^{\log d} \tilde{\mR}_m$, with each $\tilde{\mR}_m \in \R^{d \times d}$ sparse. 
Assuming $d$ is a power of two, $\log d$ is integral, ensuring a valid factorization. 
This construction restores the expressiveness of dense rotations with reduced parameters.

\section{Methodology}
\label{sec:methodology}
% Methodology
% Page 4 - 5
As discussed in Section~\ref{sec:introduction}, existing OFT variants such as BOFT and qGOFT still incur substantial computational and memory overhead. 
Prior studies~\citep{li2018measuring, aghajanyan2021intrinsic, hu2022lora} further suggest that both pre-trained models and their task-specific adaptations lie in a low-rank intrinsic subspace. 
Motivated by this insight, we propose \textbf{O}rthogonal \textbf{F}ine-\textbf{T}uning with \textbf{P}rincipal \textbf{S}ubspace adaptation (\textbf{PSOFT}), which confines orthogonal transformations to the low-rank principal subspace of $\mW_{\text{pre}}$. 
The complete algorithm is given in Appendix~\ref{appendix:alg}, and the remainder of this section details its design.

\subsection{Dimension-Compatible Orthogonal Transforms}
\label{subsection:m1}
Realizing orthogonal fine-tuning in the subspace requires a projection of high-dimensional weights onto a low-dimensional subspace, since directly applying the orthogonal matrix $\mR \in \R^{r \times r}$ to $\mW_{\text{pre}} \in \R^{d \times n}$ is infeasible due to dimensional incompatibility. 
To construct this projection, we perform Singular Value Decomposition (SVD), $\mW_{\text{pre}} = \mU \bm{\Sigma} \mV^{\top}$, and decompose it into $\mW_{\text{pri}}$ and $\mW_{\text{res}}$, such that $\mW_{\text{pre}}=\mW_{\text{pri}}+\mW_{\text{res}}$. Here, the subscript ``pri'' denotes the principal component reconstructed from the top-$r$ singular values and vectors, while ``res'' denotes the residual component. The principal component $\mW_{\text{pri}}$ is then used to derive symmetric low-rank matrices $\mA$ and $\mB$ as:
\begin{equation}
\mW_{\text{pri}} = 
\underbrace{\mU_{[:, :r]} \sqrt{\bm{\Sigma}_{[:r, :r]}}}_{\mA \in \R^{d \times r}}
\underbrace{\sqrt{\bm{\Sigma}_{[:r, :r]}} \mV_{[:, :r]}^\top}_{\mB \in \R^{r \times n}}
\in \R^{d \times n} \quad \text{(Symmetric)},
\label{original_pri}
\end{equation}
where $\mA$ projects weights into the $r$-dimensional principal subspace, while $\mB$ reconstructs them back.
The residual component $\mW_{\text{res}}$ is then obtained from the remaining singular values and vectors:
\begin{equation} 
\mW_{\text{res}} = \mW_{\text{pre}} - \mW_{\text{pri}} = \mU_{[:, r:]} \bm{\Sigma}_{[r:, r:]} \mV_{[:, r:]}^\top \in \mathbb{R}^{d \times n}. 
\label{res} 
\end{equation}

Building on this, we regard $\mW_{\text{pri}}=\mA\mB$ as representing the initial principal subspace of $\mW_{\text{pre}}$. This subspace enables dimension-compatible orthogonal transformations, yielding $\mW_{\text{ps-tuned}}=\mA \mR \mB$, where the subscript ``ps-tuned'' denotes the fine-tuned weights in the principal subspace for PSOFT.

Unlike LoRA~\citep{hu2022lora} and PiSSA~\citep{meng2024pissa}, which train both $\mA$ and $\mB$, PSOFT freezes them and fine-tunes only the orthogonal matrix $\mR$. 
LoRA produces updates $\Delta \mW = \mA \mB$ that span the low-rank manifold $\{\Delta \mW: \mathrm{rank}(\Delta \mW)\le r\}$ of dimension $r(d+n-r)$. In contrast, PSOFT generates updates $\Delta \mW = \mA(\mR-\mI)\mB$ parameterized solely by an orthogonal matrix $\mR\in O(r)$, where $O(r)$ denotes the $r(r-1)/2$-dimensional orthogonal group. Because the variability of $\Delta \mW$ arises only through $\mR$, all updates remain confined to the fixed row and column subspaces defined by $\mA$ and $\mB$. Consequently, LoRA and PSOFT operate on fundamentally different geometric families of updates (low-rank vs. orthogonal), and their expressiveness is therefore not directly comparable. The same structural distinction also determines different feasible ranks under an equal trainable-parameter budget $M$. LoRA trains two matrices, giving $M = (d+n)\, r_{\mathrm{LoRA}}$ and thus $r_{\mathrm{LoRA}} = M/(d+n)$, whereas PSOFT trains only an orthogonal matrix, yielding $M = r_{\mathrm{PSOFT}}^{2}$ and hence $r_{\mathrm{PSOFT}} = \sqrt{M}$. Since typically $\sqrt{M} \ll (d+n)$, we obtain $r_{\mathrm{PSOFT}} \gg r_{\mathrm{LoRA}}$, which explains why PSOFT empirically operates with much larger ranks under the same parameter budget.

\begin{figure}[t]
\centering
\includegraphics[width=1.0\linewidth]{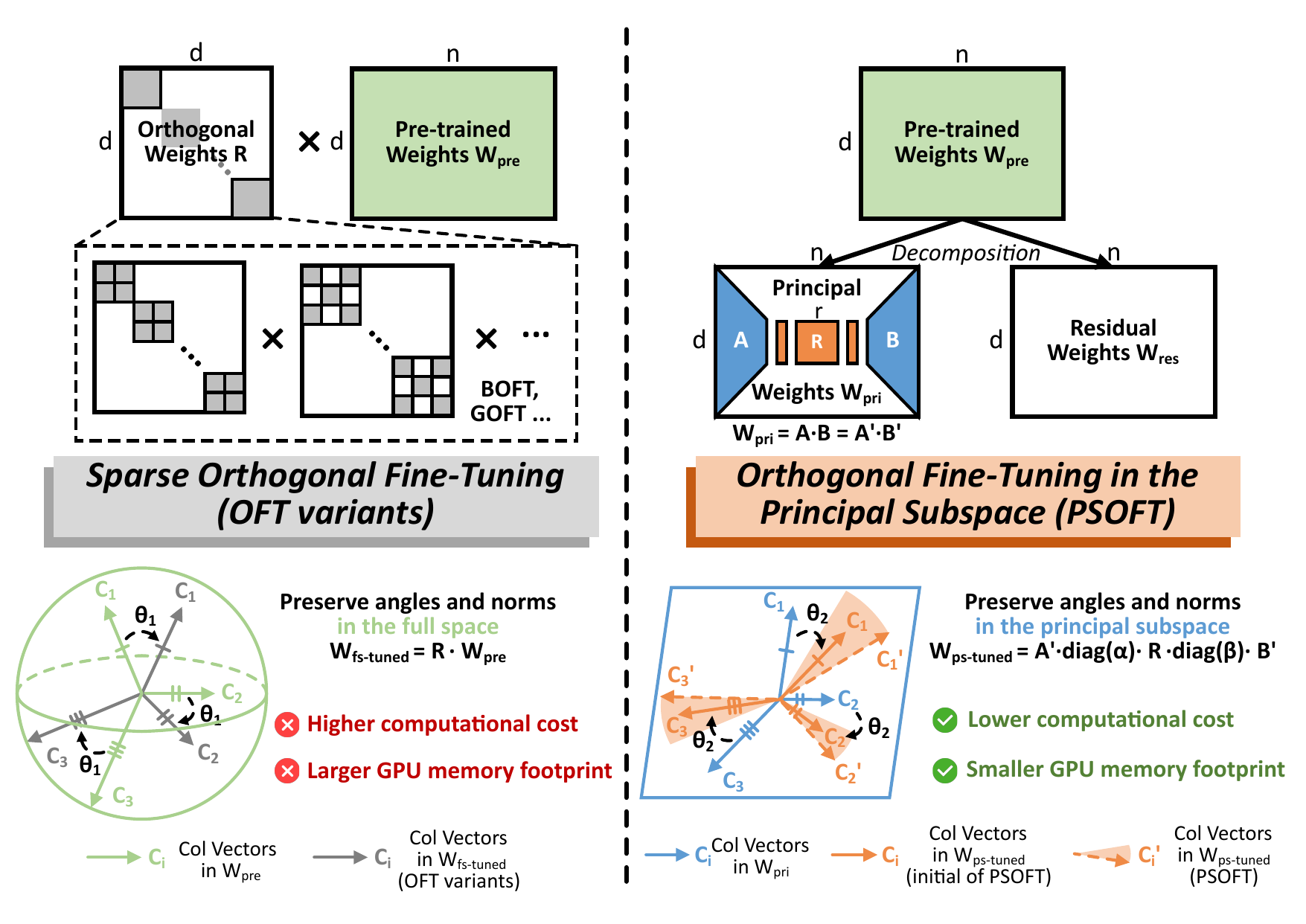}
\caption{Our proposed method: PSOFT. 
The left panel illustrates the principles of OFT variants.  
On the right, PSOFT preserves the angles and norms of $\mW_{\text{pri}}$ (blue) in the fine-tuned $\mW_{\text{ps-tuned}}$ (orange), while allowing adjustable angles and scalable norms in the sector.
}
\label{fig:PSOFT}
\end{figure}

\subsection{Guaranteed Geometry Preservation in the Principal Subspace}
\label{subsection:m2}
Orthogonal transformations within the constructed principal subspace in Section~\ref{subsection:m1} merely ensure dimensional compatibility but do not strictly preserve subspace geometry. 
In particular, applying a low-dimensional orthogonal matrix $\mR$ to the subspace spanned by symmetric $\mA$ and $\mB$ in Eq.~\ref{original_pri} may distort the pairwise angles and norms among the column vectors of $\mW_{\text{pri}}$. 
To address this issue, we analyze the conditions under which orthogonal fine-tuning preserves the geometry of the principal subspace, and present an informal Theorem~\ref{thm:informal}, with the formal theorem and proof in Appendix~\ref{appendix:proof}.

\begin{theorem}[Informal: Angle and norm preservation in the principal subspace]
Let $\mW_{\textnormal{pri}} = \mA \mB$ denote the principal weights and $\mW_{\textnormal{ps-tuned}} = \mA \mR \mB$ denote the fine-tuned weights. 
For $\mW_{\textnormal{ps-tuned}}$ to preserve (i) pairwise angles between columns, and (ii) column norms of $\mW_{\textnormal{pri}}$, 
the following condition must hold:
\begin{equation}
\mR^\top \mA^\top \mA \mR = \mA^\top \mA.
\label{eq:condition}
\end{equation}
\label{thm:informal}
\end{theorem}

We provide an intuitive explanation of Theorem~\ref{thm:informal}. 
The geometry of the principal subspace is determined by the relative angles and lengths of its column vectors, which are encoded in the Gram matrix $\mG = \mA^\top \mA$. 
Any $\mR$ satisfying $\mR^\top \mG \mR = \mG$ can be viewed as a symmetry of this geometry, similar to a rotation or reflection. In other words, if we first apply $\mR$ to the columns of $\mB$ and then project them using $\mA$, their angles and lengths in the high-dimensional space remain unchanged.

In practice, normalizing $\mA$ so that ${\mA}^\top \mA = \mI_r$ simplifies the condition, in which case $\mR$ reduces to a standard orthogonal matrix.
Accordingly, Eq.~\ref{original_pri} is modified in PSOFT as: 
\begin{equation}
\mW_{\text{pri}} = 
\underbrace{\mU_{[:, :r]}}_{\mA^{\prime} \in \mathbb{R}^{d \times r}}
\underbrace{\bm{\Sigma}_{[:r, :r]} \mV_{[:, :r]}^\top}_{\mB^{\prime} \in \mathbb{R}^{r \times n}}
\in \mathbb{R}^{d \times n} \quad \text{(Asymmetric)},
\label{PSOFT_pri}
\end{equation}
where asymmetric $\mA^{\prime}$ and $\mB^{\prime}$ are derived from the top-$r$ principal components of the SVD. 
The residual $\mW_{\text{res}}$ remains as in Eq.~\ref{res}, and the forward computation becomes:
\begin{equation}
\vh = (\mW_{\text{ps-tuned}} + \mW_{\text{res}})^\top \vx 
= (\mA^{\prime} \mR \mB^{\prime} + \mW_{\text{res}})^\top \vx, 
\label{forward_PSOFT-so}
\end{equation}
where $\mA^{\prime}$, $\mB^{\prime}$, and $\mW_{\text{res}}$ are frozen, and only $\mR \in \mathbb{R}^{r \times r}$ is trainable, initialized as the identity matrix.

To satisfy Eq.~\ref{eq:condition} during training, it is ssential to maintain the orthogonality of $\mR$.
Enforcing orthogonality of $\mR$ (\emph{e.g.,} via Gram-Schmidt orthogonalization) is computationally expensive. 
To reduce this cost, following prior studies~\citep{qiu2023controlling, qiu2025oftv2}, we adopt the Cayley parameterization~\citep{cayley1894collected} to enforce the strict orthogonality of \(\bm{R}\), where \(\bm{R} = (\bm{I} - \bm{Q})(\bm{I} + \bm{Q})^{-1}\) and \(\bm{Q} = -\bm{Q}^\top\) is a skew-symmetric matrix.
Further details on the Cayley parameterization are provided in Appendix~\ref{appendix:cayley}.

\subsection{Efficient Relaxations of Orthogonality}
\label{subsection:m3}
Eqs.~\ref{PSOFT_pri} and~\ref{forward_PSOFT-so} guarantee geometry preservation in the principal subspace, but strict orthogonality constraints may hinder adaptation to task-specific drifts, leading to suboptimal performance. 
Empirical evidence shows that moderate relaxation improves results~\citep{ma2024parameter}. 
Yet existing methods sacrifice efficiency: qGOFT relaxes constraints more flexibly but requires four times the parameters of GOFT~\citep{ma2024parameter}, while BOFT relaxes them through additional scaling vectors on the output dimension, whose size grows linearly with model scale~\citep{liu2024parameter}. 
To overcome these issues, we propose efficient relaxations of PSOFT that enhance adaptability with minimal overhead.

Specifically, we introduce two tunable vectors that modulate the input and output norms around the orthogonal matrix, modifying Eq.~\ref{forward_PSOFT-so} to yield the following forward computation:
\begin{equation}
\label{forward_PSOFT-ro}
\vh = (\mA^{\prime} \operatorname{diag}(\bm{\alpha}) \mR \operatorname{diag}(\bm{\beta}) \mB^{\prime} + \mW_{\text{res}})^\top \vx  \quad \text{(PSOFT)},
\end{equation}
where $\mA^{\prime}$, $\mB^{\prime}$, and $\mW_{\text{res}}$ remain fixed, while only $\mR$ and the tunable vectors $\bm{\alpha}$ and $\bm{\beta}$ are trained. Both vectors are initialized as all-one vectors to ensure strict orthogonality at the start of training.
As illustrated in Figure~\ref{fig:PSOFT}, PSOFT relaxes this constraint during training, enabling adjustable angles and scalable norms that adapt to task objectives. 
As these two additional vectors are inserted within the subspace, the overhead is limited to $2r$ parameters ($2r \ll n$, where $n$ is the output dimension), enhancing adaptability with minimal cost and without significantly affecting the geometric structure.

To avoid excessive deviation from orthogonality, an explicit constraint can be imposed:
\(
\left\| \mC^\top\mC - \mI \right\|_F \le \epsilon,
\quad \text{where } \mC= \operatorname{diag}(\bm{\alpha})\,\mR\,\operatorname{diag}(\bm{\beta}).
\)
Deviation arises when either \(\operatorname{diag}(\bm{\alpha})\) or \(\operatorname{diag}(\bm{\beta})\) deviates from a scalar multiple of the identity. In the special case where \(\operatorname{diag}(\bm{\alpha}) = \lambda_1 \mI\) and \(\operatorname{diag}(\bm{\beta}) = \lambda_2 \mI\), angular relationships are preserved, and magnitudes are uniformly scaled.

In summary, PSOFT performs orthogonal fine-tuning to the low-rank principal subspace, enabling dimension-compatible transformations with theoretical guarantees on subspace geometry, while relaxing strict orthogonality at negligible cost to enhance adaptability.
It requires only $r(r-1)/2 + 2r$ trainable parameters by combining the Cayley parameterization with two efficient tunable vectors. 
Moreover, it reduces both the number and size of additional matrices (from $\min(d,n)$ to $r$, with $r \ll \min(d,n)$), thereby yielding substantially lower activation memory than other OFT variants under the same batch size and sequence length.
Detailed comparisons of parameter counts and activation memory analysis across different PEFT methods are provided in Appendices~\ref{appendix:params} and \ref{appendix:activation}.

\section{Experiments}
\label{sec:experiments}
% Experiments
% Page 6 - 9

To evaluate PSOFT, we conduct experiments on 35 tasks spanning language and vision domains, 
using encoder-only models (DeBERTaV3-base~\citep{he2021debertav3}, ViT-B/16~\citep{dosovitskiy2021image}),  
and decoder-only models (LLaMA-3.2-3B~\citep{meta2024llama3.2}, LLaMA-3.1-8B~\citep{grattafiori2024llama}).
These models are fine-tuned on downstream tasks, covering natural language understanding~\citep{wang2019glue}, visual classification~\citep{zhai2019large}, mathematical QA~\citep{yu2024metamath}, 
and commonsense reasoning~\citep{hu2023llm}. 
We evaluate key metrics such as parameter counts, peak memory usage, and accuracy in the main experiments, and assess training speed separately in the efficiency analysis.
Following OFTv2~\citep{qiu2025oftv2}, we implement the Cayley parameterization by approximating $(\mI+\mQ)^{-1}$ with a truncated Neumann series, $\sum_{k=0}^{K} (-\mQ)^{k}$, using $K=5$ terms in practice.
All experiments are performed on a single GPU with FP32, using an NVIDIA RTX 4090 (24 GB) for encoder-only models 
and an NVIDIA H100-SXM (80 GB) for decoder-only models.

\textbf{Baselines.} We employ state-of-the-art OFT variants with other advanced PEFT methods as baselines:
\begin{itemize}[leftmargin=*]
\item \textbf{FFT} \citep{howard2018universal} updates all model weights during fine-tuning.
\item \textbf{GOFTv2 \& qGOFTv2} \citep{ma2024parameter} replace full-space OFT with Givens rotations. The latest implementation uses Hadamard products instead of sparse multiplication.
\item \textbf{BOFT} \citep{liu2024parameter} substitutes full-space OFT with butterfly factorization.  
\item \textbf{OFTv2} \citep{qiu2023controlling, qiu2025oftv2} employs a block-diagonal structure for OFT, with the latest version adopting an input-centric computation and Cayley-Neumann parameterization.  

\item \textbf{LoRA} \citep{hu2022lora} freezes pre-trained weights and adjusts only two low-rank matrices.
\item \textbf{PiSSA} \citep{meng2024pissa} improves LoRA initialization to fine-tune principal weights.
\item \textbf{DoRA} \citep{liu2024dora} decomposes low-rank adaptation into direction and magnitude.
\item \textbf{LoRA-XS} \citep{balazy2024lora-xs} injects and tunes a single square matrix between LoRA’s matrices.
\end{itemize}

\begin{wraptable}{r}{0.6\textwidth}
\vskip -0.1in
\caption{Experimental results of fine-tuned DeBERTaV3-base. Results are averaged over 5 random seeds. Memory (GB) denotes peak memory with sequence length 64.}
\label{tab:glue}
\begin{center}
\setlength{\tabcolsep}{1pt} 
\renewcommand{\arraystretch}{1} 
\resizebox{0.6\textwidth}{!}{
\begin{small}
\begin{tabular}{lccccccccc}
\toprule
\textbf{Methods} & \textbf{\#Params} &  \makecell{\textbf{Memory}\\ \textbf{(GB)}} & \textbf{CoLA} & \textbf{STS-B} & \textbf{RTE} & \textbf{MRPC} & \textbf{SST2} & \textbf{QNLI} & \colorbox{lightorange}{\textbf{Avg.}} \\
\midrule
FFT                 & 184M  & 5.9  & 67.56 & 91.46 & 82.88 & 90.69 & 94.13 & 93.37 & \colorbox{lightorange}{86.68} \\
\midrule
GOFTv2              & \textbf{0.08M }& 18.5 & 65.45 & \multicolumn{6}{c}{N/A. (OOM)} \\ 
qGOFTv2             & 0.33M & 18.5 & 68.03 & \multicolumn{6}{c}{N/A. (OOM)} \\ 
BOFT$^{b=8}_{m=2}$  & 1.41M & 6.3  & 68.85 & 91.09 & 83.60 & 88.40 & 95.28 & 93.78 & \colorbox{lightorange}{86.83} \\
OFTv2$_{b=32}$      & 1.29M & 4.5  & 66.79 & 91.22 & 84.03 & 89.61 & 93.72 & 92.64 & \colorbox{lightorange}{86.34} \\
LoRA$_{r=8}$        & 1.33M & 4.5  & 67.98 & 91.60 & 84.87 & 90.20 & 95.28 & 93.89 & \colorbox{lightorange}{87.30} \\
PiSSA$_{r=8}$       & 1.33M & 4.5  & 66.50 & 91.40 & 83.77 & 89.90 & 93.17 & 92.72 & \colorbox{lightorange}{86.24} \\
DoRA$_{r=8}$        & 1.41M & 5.8 & 67.06 & 91.60 & 87.19 & 90.49 & 95.23 & 94.09 & \colorbox{lightorange}{87.61} \\
LoRA-XS$_{r=136}$   & 1.33M & 4.2  & 64.67 & 91.48 & 84.17 & 91.27 & 93.85 & 93.14 & \colorbox{lightorange}{86.43} \\
\rowcolor{lightgreen}
PSOFT$_{r=46}$    & \textbf{0.08M} & \textbf{4.1}  & 70.42 & 91.56 & 86.74 & 90.49 & 95.55 & 93.47 & \colorbox{lightorange}{\textbf{88.04}} \\
\bottomrule
\end{tabular}
\end{small}
}
\end{center}
\vskip -0.1in
\end{wraptable}

\textbf{Encoder-only Models.}
We evaluate PSOFT by fine-tuning DeBERTaV3~\citep{he2021debertav3} on several datasets from the GLUE benchmark~\citep{wang2019glue}. 
Following prior work~\citep{wu2024advancing, wu2024reft, bini2025decoupling}, we split the original validation set into new validation/test sets with a fixed seed, and report test accuracy from the best validation checkpoint to ensure rigorous evaluation. Details are in Appendix~\ref{appendix:glue}.

As shown in Table~\ref{tab:glue}, GOFTv2 and qGOFTv2 have non-tunable parameters and often encounter OOM failures as the sequence length increases.
\textit{PSOFT improves parameter and memory efficiency without compromising performance.} 
Although GOFT and PSOFT have the same parameter counts, PSOFT reduces memory usage by about 80\% and avoids OOM issues. 
It further achieves up to an 18$\times$ improvement in parameter efficiency over BOFT, OFTv2, and LoRA variants, attaining the best average performance across all baselines with the lowest memory footprint.
Compared with LoRA variants that do not rely on weight decomposition, DoRA introduces additional memory overhead. For LoRA-XS, the update is constrained by the initialization of its low-rank matrices, which limits its expressiveness and consequently leads to degraded performance.
These results highlight PSOFT’s ability to achieve both efficiency and performance.

\begin{table}[ht]
\caption{Experimental results of fine-tuned ViT-B/16 on the VTAB-1K benchmark. Reported values (top-1 accuracy \%) are the mean of 5 runs with different random seeds.}
\label{tab:vtab}
\vskip -0.2in
\begin{center}
\begin{small}
\setlength{\tabcolsep}{2pt} 
\renewcommand{\arraystretch}{1.0} 
\resizebox{\textwidth}{!}{%
\begin{tabular}{l|c|c|*{6}{c}c|*{3}{c}c|*{7}{c}c|c}
\toprule

\multirow{2}{*}{\raisebox{-5\height}{\textbf{Methods}}} & \multirow{2}{*}{\raisebox{-1.25\height}{\rotatebox{90}{\textbf{\#Params}}}} & \multirow{2}{*}{\raisebox{-1.1\height}{\rotatebox{90}{\textbf{Mem (GB)}}}} & \multicolumn{7}{c|}{\textbf{Natural}} & \multicolumn{4}{c|}{\textbf{Specialized}} & \multicolumn{8}{c|}{\textbf{Structured}} & \multirow{2}{*}{\raisebox{-1.75\height}{\rotatebox{90}{\colorbox{lightorange}{\textbf{Avg.}}}}} \\

& & & \rotatebox{90}{\textbf{Cifar100}} & \rotatebox{90}{\textbf{Caltech101}} & \rotatebox{90}{\textbf{DTD102}} & \rotatebox{90}{\textbf{Flower102}} & \rotatebox{90}{\textbf{Pets}} & \rotatebox{90}{\textbf{SVHN}} & \rotatebox{90}{\textbf{Sun397}} & \rotatebox{90}{\textbf{Camelyon}} & \rotatebox{90}{\textbf{EuroSAT}} & \rotatebox{90}{\textbf{Resisc45}} & \rotatebox{90}{\textbf{Retinopathy}} & \rotatebox{90}{\textbf{Clevr-Count}} & \rotatebox{90}{\textbf{Clevr-Dist}} & \rotatebox{90}{\textbf{DMLab}} & \rotatebox{90}{\textbf{KITTI-Dist}} & \rotatebox{90}{\textbf{dSpr-Loc}} & \rotatebox{90}{\textbf{dSpr-Ori}} & \rotatebox{90}{\textbf{sNORB-Azim}} & \rotatebox{90}{\textbf{sNORB-Ele}} &  \\

\midrule
FFT        & 85.9M  & 8.2 & 70.7 & 89.3 & 69.5 & 99.0 & 90.4 & 81.7 & 54.9 & 85.4 & 93.6 & 83.8 & 74.5 & 58.3 & 51.5 & 43.2 & 75.0 & 73.1 & 48.7 & 16.4 & 30.0 & \colorbox{lightorange}{67.8} \\
GOFTv2    & \textbf{0.08M}  & OOM & \multicolumn{20}{c}{N/A.} \\ 
qGOFTv2   & 0.33M  & OOM & \multicolumn{20}{c}{N/A.} \\ 
BOFT$^{b=8}_{m=2}$  & 1.41M & 10.9 & 70.6 & 88.2 & 69.8 & 99.0 & 91.4 & 77.4 & 55.1 & 85.1 & 93.6 & 82.3 & 74.9 & 61.8 & 50.4 & 42.9 & 76.1 & 73.7 & 48.8 & 15.7 & 30.8 &  \colorbox{lightorange}{70.9} \\
OFTv2$_{b=32}$     & 1.29M & 7.7  & 68.5 & 88.9 & 67.5 & 98.4 & 89.5 & 86.9 & 53.6 & 86.0 & 94.1 & 84.2 & 74.6 & 58.7 & 56.4 & 46.7 & 78.5 & 81.1 & 48.1 & 17.3 & 32.5 &  \colorbox{lightorange}{72.1} \\
LoRA$_{r=8}$        & 1.33M & 9.9  & 71.4 & 88.4 & 70.1 & 99.0 & 91.4 & 76.6 & 55.7 & 85.9 & 94.2 & 83.3 & 74.1 & 72.0 & 54.3 & 43.0 & 76.6 & 74.8 & 48.6 & 16.4 & 31.8 &  \colorbox{lightorange}{71.8} \\
PiSSA$_{r=8}$       & 1.33M & 9.9  & 70.7 & 88.7 & 68.9 & 99.2 & 91.0 & 81.9 & 53.3 & 82.6 & 93.4 & 83.0 & 74.0 & 71.0 & 60.2 & 44.0 & 77.1 & 81.9 & 51.8 & 18.1 & 33.1 &  \colorbox{lightorange}{72.3} \\
DoRA$_{r=8}$        & 1.41M & 17.8 & 70.7 & 89.0 & 69.8 & 98.9 & 91.0 & 81.7 & 55.5 & 85.7 & 94.2 & 83.5 & 74.8 & 67.3 & 54.2 & 45.1 & 77.4 & 82.0 & 48.5 & 16.9 & 31.5 &  \colorbox{lightorange}{72.3} \\
LoRA-XS$_{r=136}$   & 1.33M & 6.6  & 68.5 & 89.4 & 68.4 & 98.7 & 90.9 & 84.5 & 54.1 & 84.0 & 94.3 & 80.8 & 73.6 & 60.0 & 57.7 & 45.8 & 79.6 & 80.6 & 48.1 & 17.4 & 30.8 &  \colorbox{lightorange}{71.6} \\
\rowcolor{lightgreen}
PSOFT$_{r=46}$    & \textbf{0.08M} & \textbf{6.2}  & 71.9 & 89.6 & 70.3 & 99.1 & 91.8 & 86.9 & 55.9 & 84.6 & 94.2 & 82.4 & 75.2 & 71.2 & 59.9 & 45.7 & 79.6 & 80.9 & 52.9 & 20.0 & 32.9 &  \colorbox{lightorange}{\textbf{73.4}} \\ 
\bottomrule
\end{tabular}
}
\end{small}
\end{center}
\end{table}

We also evaluate PSOFT by fine-tuning ViT-B/16~\citep{dosovitskiy2021image} on the VTAB-1K benchmark~\citep{zhai2019large}. Further details are provided in Appendix~\ref{appendix:vtab}.
As shown in Table~\ref{tab:vtab}. 
\textit{PSOFT extends its efficiency-performance advantages on the small-scale model from language tasks to vision tasks.} 
Beyond avoiding the heavy memory demands of GOFTv2 and qGOFTv2, PSOFT consistently reduces the memory overhead of BOFT and OFTv2. 
Compared to LoRA and its variants, it achieves the best average accuracy with about 94\% fewer parameters and the lowest peak memory footprint. 
Interestingly, we also observe that parameter counts and memory overheads of different PEFT methods do not necessarily correlate. For example, the weight decomposition in DoRA introduces substantial memory overhead on the ViT-B/16 model compared with other LoRA variants, even when the number of trainable parameters is similar. This suggests that PEFT design should consider multi-dimensional efficiency beyond parameter efficiency alone.

\begin{wraptable}{r}{0.5\textwidth}
\vskip -0.1in
\caption{Experimental results of fine-tuned LLaMA-3.2-3B on GSM-8K and MATH.}
\label{tab:math}
\begin{center}
\setlength{\tabcolsep}{1pt} 
\renewcommand{\arraystretch}{1.0} 
\resizebox{0.45\textwidth}{!}{
\begin{small}
\begin{tabular}{lcccc}
\toprule
\textbf{Methods} & \textbf{\#Params} & \makecell{\textbf{Memory}\\ \textbf{(GB)}} & \textbf{GSM-8K} & \textbf{MATH} \\
\midrule
FFT                 & 3.21B     & 69.0 & 63.00 & 16.84 \\
\midrule
GOFTv2             & 0.75M  & OOM  & \multicolumn{2}{c}{N/A.}  \\
qGOFTv2            & 2.98M  & OOM  & \multicolumn{2}{c}{N/A.}  \\
BOFT$^{b=2}_{m=2}$  & 3.76M  & OOM  & \multicolumn{2}{c}{N/A.}  \\
OFTv2$_{b=32}$     & 11.6M & 35.2                  & 61.03 & 15.70 \\  
LoRA$_{r=8}$        & 12.2M  & 32.2                  & 60.80 & 15.76 \\  
PiSSA$_{r=8}$       & 12.2M  & 32.2                  & 61.26 & 14.96 \\ 
DoRA$_{r=8}$        & 12.9M  & 43.4                  & 62.62 & 15.48 \\ 
LoRA-XS$_{r=248}$   & 12.1M  & 34.4                  & 61.56 & 15.02 \\ 
\rowcolor{lightgreen}
PSOFT$_{r=352}$   & 12.2M  & 36.2 & \textbf{63.08} & \textbf{15.98} \\
\bottomrule
\end{tabular}
\end{small}
}
\end{center}
\vskip -0.1in
\end{wraptable}

\textbf{Decoder-only Models.}
Following prior work~\citep{lingam2024svft, liu2024parameter}, we fine-tune the LLaMA-3.2-3B~\citep{meta2024llama3.2} model on MetaMathQA-40K~\citep{yu2024metamath} and evaluate on GSM-8K~\citep{cobbe2021training} and MATH~\citep{hendrycks2021measuring}. 
For large-scale models and complex tasks, where performance is more sensitive to parameter counts, we align trainable parameters by setting the LoRA rank to 8 to ensure a fair comparison.
PEFT modules are applied to all linear layers, with additional hyperparameter details in Appendix~\ref{appendix:math}.

As shown in Table~\ref{tab:math}, as models scale up, BOFT suffers from OOM failures like GOFTv2 and qGOFTv2, whereas PSOFT avoids this issue. \textit{PSOFT reduces the peak memory footprint of OFT variants to a level comparable with LoRA-like methods, while delivering superior performance under similar parameter counts.} 
Against advanced PEFT methods, it outperforms LoRA (+2.28\%) on GSM-8K and PiSSA (+1.02\%) on MATH, while maintaining memory usage comparable to LoRA-like baselines. 
Compared to the sparsity-based OFTv2, PSOFT achieves higher performance at comparable cost.
When scaling to large models and complex reasoning tasks, PSOFT adapts by employing a higher rank $r$ to ensure sufficient expressiveness, yet still maintains efficiency and clear memory advantages over BOFT, GOFTv2, qGOFTv2, and DoRA. Although increasing the rank may enhance the expressiveness of LoRA-XS, its performance remains fundamentally constrained by the initialization: the inserted square matrix is trainable only as a linear combination within the original low-rank subspace.
Even under restricted module insertion and tighter parameter budgets, PSOFT still reduces memory overhead relative to qGOFTv2 and BOFT (Table~\ref{appendix-tab:math} in Appendix~\ref{appendix:math}), demonstrating strong scalability to large models and complex mathematical tasks.

\begin{table}[ht]
\vskip -0.2in
\caption{Experimental results of fine-tuned LLaMA-3.1-8B on commonsense reasoning benchmarks.}
\label{tab:common}
\begin{center}
\setlength{\tabcolsep}{1pt} 
\renewcommand{\arraystretch}{1.0} 
\resizebox{0.8\textwidth}{!}{
\begin{small}
\begin{tabular}{lccccccccccccc}
\toprule
\textbf{Methods} & \textbf{\#Params} & \makecell{\textbf{Memory}\\ \textbf{(GB)}} & \textbf{BoolQ} & \textbf{PIQA} & \textbf{SIQA} & \textbf{HS} & \textbf{WG} & \textbf{ARC-e} & \textbf{ARC-c} & \textbf{OBQA} & \colorbox{lightorange}{\textbf{Avg.}} \\
\midrule
FFT                 & 8.03B & OOM & \multicolumn{9}{c}{N/A.} \\ 
\midrule
GOFTv2             & 0.98M & OOM & \multicolumn{9}{c}{N/A.} \\ 
qGOFTv2            & 3.93M  & OOM & \multicolumn{9}{c}{N/A.} \\ 
BOFT$^{b=2}_{m=2}$  & 4.72M  & OOM & \multicolumn{9}{c}{N/A.} \\
OFTv2$_{b=32}$     & 14.3M  & 55.5 & 70.83 & 84.44 & 73.34 & 90.63 & 74.11 & 90.87 & 80.12 & 81.80 & \colorbox{lightorange}{80.77} \\
LoRA$_{r=8}$        & 14.2M  & 54.1 & 73.18 & 85.31 & 74.36 & 86.57 & 74.19 & 90.95 & 80.29 & 84.00 & \colorbox{lightorange}{81.11} \\
PiSSA$_{r=8}$       & 14.2M & 54.1 & 71.22 & 86.02 & 75.38 & 90.27 & 74.19 & 89.90 & 79.44 & 84.00 & \colorbox{lightorange}{81.30} \\
DoRA$_{r=8}$        & 14.9M  & 65.6 & 73.09 & 85.96 & 75.08 & 90.48 & 75.53 & 90.74 & 81.40 & 84.40 & \colorbox{lightorange}{82.09} \\
LoRA-XS$_{r=298}$   & 14.2M  & 56.2 & 72.35 & 86.51 & 75.18 & 91.73 & 74.98 & 90.74 & 79.52 & 84.00 & \colorbox{lightorange}{81.88} \\
\rowcolor{lightgreen}
PSOFT$_{r=424}$   & 14.5M & 58.4 & 72.17 & 86.51 & 75.79 & 91.28 & 75.61 & 91.46 & 81.48 & 86.00 & \colorbox{lightorange}{\textbf{82.54}} \\
\bottomrule
\end{tabular}
\end{small}
}
\end{center}
\end{table}

Following prior work~\citep{hu2023llm, lingam2024svft, liu2024dora}, we further fine-tune LLaMA-3.1-8B~\citep{grattafiori2024llama} on the Commonsense-15K dataset~\citep{hu2023llm} and evaluate it on eight commonsense reasoning benchmarks. PEFT modules are applied to the \(Q,K,V,U,D\) linear layers. Appendix~\ref{appendix:common} details the hyperparameter settings. 
As shown in Table~\ref{tab:common}, \textit{PSOFT mitigates the frequent OOM issues of OFT on larger models while achieving the best average performance.}
In practice, GOFTv2, qGOFTv2, and BOFT suffer from OOM failures even without inserting modules into all linear layers, severely limiting their use in large-scale fine-tuning, whereas PSOFT provides a more memory-friendly alternative.
Under comparable costs, it surpasses OFTv2 by 1.77\% in average accuracy, matches the memory efficiency of LoRA-like baselines while delivering higher accuracy, and reduces memory usage by about 7~GB relative to DoRA. As the model size increases, DoRA attains performance that is surpassed only by PSOFT, but its memory overhead becomes noticeably higher than that of other LoRA variants. 
PSOFT further remains effective under reduced parameter budgets and restricted module insertion (Table~\ref{appendix-tab:common} in Appendix~\ref{appendix:common}), underscoring its practicality in balancing efficiency and performance across diverse settings.

\begin{figure}[ht]
\begin{minipage}[b]{0.57\textwidth}
\captionsetup{type=table,position=top}
\captionof{table}{Effect of orthogonality of $\mR$ on LLaMA-3.2-3B.}
\label{ablation:orth}
\setlength{\tabcolsep}{1pt} 
\renewcommand{\arraystretch}{1.0} 
\begin{center}
    \begin{small}
\resizebox{\textwidth}{!}{
\begin{tabular}{lccc}
\toprule
\textbf{Methods} & \textbf{\#Params} & \textbf{GSM-8K} & \textbf{MATH} \\
\midrule
PiSSA+LoRA-XS$_{r=248}$ ($\gamma$=0.0)  & 12.1M & 61.26 & 14.72 \\
PiSSA+LoRA-XS$_{r=248}$ ($\gamma$=0.01) & 12.1M & 61.26 & 14.80 \\
PiSSA+LoRA-XS$_{r=248}$ ($\gamma$=0.1)  & 12.1M & 59.89 & 14.90 \\
PiSSA+LoRA-XS$_{r=248}$ ($\gamma$=1.0)  & 12.1M & 59.36 & 14.44 \\
PSOFT$_{r=248}$ (strict orthogonality)& 6.0M  & 61.18 & 14.80 \\
PSOFT$_{r=352}$ (strict orthogonality)& 12.1M & \textbf{62.77} & \textbf{15.74} \\
\bottomrule
\end{tabular}
}
\end{small}
\end{center}
\end{minipage}
\hfill
\begin{minipage}[b]{0.4\textwidth}
    \captionsetup{type=figure,position=bottom}
    \includegraphics[width=\linewidth]{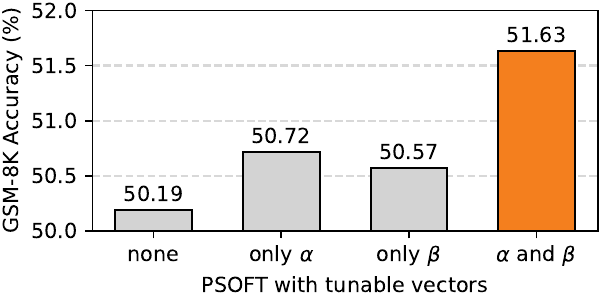}
    \captionof{figure}{Effect of tunable vectors.}
    \label{tunable_vectors}
\vspace{-4em}
\end{minipage}
\end{figure}

\textbf{Ablation Studies.} 
To study the effect of orthogonality of $\mR$, we follow AdaLoRA~\citep{zhang2023adaptive} and add an orthogonality regularizer $L_{\text{orth}}=\|\mR^\top \mR - I\|_F$, resulting in the objective $L = L + \gamma L_{\text{orth}}$ with weight $\gamma$. Setting $\gamma=0$ recovers PiSSA+LoRA-XS with unconstrained $R$. As shown in Table~\ref{ablation:orth}, this regularization avoids Cayley inversion but demands careful tuning. Under equal rank, PSOFT with strict orthogonality matches the unconstrained variant with half the parameters, and achieves clear gains once parameter counts are aligned. Therefore, Cayley parametrization in PSOFT not only enforces orthogonality but also exploits its skew-symmetric structure to improve parameter efficiency.

To study the effect of tunable vectors $\alpha$ and $\beta$, we fine-tune LLaMA-3.2-3B with rank 64, inserting PSOFT into all linear layers and evaluating on GSM-8K and MATH. As shown in Figure~\ref{tunable_vectors}, enabling both vectors achieves the best performance, while single-sided insertion provides smaller gains. This suggests that tuning only one side lacks sufficient capacity to capture task-specific variations.

\begin{wraptable}{r}{0.4\textwidth}
\vskip -0.1in
\caption{Effect of initialization.}
\label{ablation:init}
\begin{center}
\begin{small}
\resizebox{0.33\textwidth}{!}{
\begin{tabular}{lcc}
\toprule
\textbf{Methods} & \textbf{RTE} & \textbf{CoLA} \\
\midrule
\({\mA}_{\text{orth}} {\mR}_{\text{orth}} {\mB}\)   &  \textbf{85.92} & \textbf{70.63} \\
\({\mA} {\mR}_{\text{orth}} {\mB}_{\text{orth}}\)   &  52.71 & 67.97 \\
\({\mA} {\mR}_{\text{orth}} {\mB}\)                 &  71.11 & 69.23 \\
\bottomrule
\end{tabular}
}
\end{small}
\end{center}
\vskip -0.2in
\end{wraptable}

To study the effect of initialization, we compare three variants: $\mA_{\text{orth}}\mR_{\text{orth}}\mB$, $\mA\mR_{\text{orth}}\mB_{\text{orth}}$, and $\mA\mR_{\text{orth}}\mB$, where $\mA$ and $\mB$ follow PiSSA~\citep{meng2024pissa} and $\mA_{\text{orth}}, \mB_{\text{orth}}$ use orthogonal initialization  with rank 64. As shown in Table~\ref{ablation:init}, $\mA_{\text{orth}}\mR_{\text{orth}}\mB$ yields the best results, outperforming PiSSA without constraining $\mA$ and $\mB$, whereas enforcing orthogonality on $\mB$ reduces model expressiveness.

\begin{figure}[ht]
\vskip -0.2in
\centering
\begin{subfigure}[b]{0.45\linewidth}
    \centering
    \includegraphics[width=\linewidth]{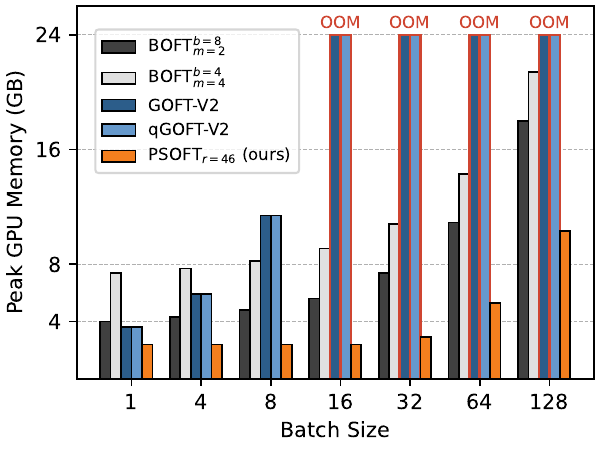}
    \caption{}
    \label{memory_bs}
\end{subfigure}
\hfill
\begin{subfigure}[b]{0.48\linewidth}
    \centering
    \includegraphics[width=\linewidth]{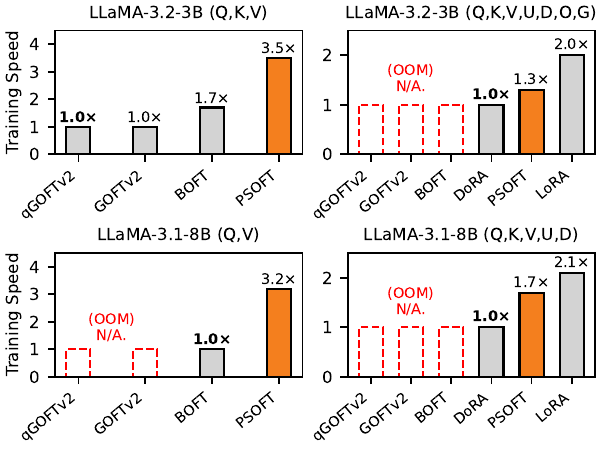}
    \caption{}
    \label{speed}
\end{subfigure}
\caption{(a) Memory usage across batch sizes. (b) Training speed across different models.}
\end{figure}

\textbf{Memory and Computational Efficiency.} 
We evaluate memory usage among different batch sizes by fine-tuning ViT-B/16 on VTAB-1K with PEFT modules in all linear layers. As shown in Figure~\ref{memory_bs}, PSOFT consistently requires less memory than advanced OFT variants across batch sizes, maintaining a peak footprint below 4 GB even at batch size 32, which highlights its suitability for resource-constrained settings. Further detailed memory analysis and experiments are provided in Appendix~\ref{appendix:memory_usage}.

We also evaluate the computational cost under the same experimental settings on a single H100 GPU as in Tables~\ref{tab:math} and \ref{tab:common}. 
As shown in Figure~\ref{speed}, on LLaMA-3.2-3B, PSOFT (\textit{Q,K,V}) trains in 57 minutes, yielding $3.5\times$ and $2.1\times$ speedups over GOFTv2/qGOFTv2 and BOFT, respectively, while its full configuration (\textit{Q,K,V,U,D,O,G}) requires 1 hour 31 minutes and achieves a $1.3\times$ speedup over DoRA. On LLaMA-3.1-8B, PSOFT (\textit{Q,V}) completes training in 29 minutes with a $3.2\times$ speedup over BOFT, and PSOFT (\textit{Q,K,V,U,D}) finishes in 53 minutes, running $1.7\times$ faster than DoRA.  Compared with other PEFT methods, its computational efficiency falls between that of DoRA and LoRA.

\section{Discussion on Scaling to Larger Models}
Due to hardware resource constraints, our empirical evaluation is limited to models of up to 8B parameters. Nevertheless, we further discuss the potential limitations and stability considerations when extending PSOFT to larger-scale models. From a methodological perspective, PSOFT scales favorably as model size increases. Because the orthogonal transformation operates in an $r$-dimensional principal subspace rather than the full $d$-dimensional weight space, both computational and activation-memory costs grow with the controllable rank~$r$ instead of the expanding dimension~$d$ required by many PEFT methods (a detailed analysis is provided in Appendix~\ref{appendix:activation}). As shown in Appendix~\ref{appendix:extension} (Tables~\ref{appendix:tab-extension-ranks-deberta} and \ref{appendix:tab-extension-ranks-llama3}), memory usage and training time remain stable as $r$ increases. The subspace-based update also avoids the long chains of full-dimensional multiplications used in GOFT and BOFT, which become increasingly expensive at larger scales. Moreover, the number of trainable parameters in PSOFT is decoupled from the hidden dimension, enabling fine-grained parameter control and preventing the minimum parameter budget from being tied to layer width. Collectively, these properties indicate that PSOFT can extend effectively to larger architectures while maintaining stable optimization behavior.

However, when applying PSOFT to models larger than 8B, several practical factors may need to be considered. Large models often exhibit higher sensitivity to hyperparameters, including learning-rate settings for structured updates such as orthogonal transformations. While PSOFT does not rely on full-dimensional orthogonal matrices, stable training at very large scales may still require careful hyperparameter tuning. Moreover, although the activation-memory growth of PSOFT is slower than that of some OFT approaches, the activations of the underlying backbone (e.g., attention and feed-forward layers) can become the dominant source of memory usage at large scales, which may constrain the choice of batch size or sequence length. Finally, as shown in the main experiments and in the additional rank-sensitivity analyses in Appendix \ref{appendix:extension}, larger models tend to benefit from higher ranks to capture task-specific variations. Very small ranks may lead to underfitting on complex tasks, whereas larger ranks improve expressiveness but also increase the trainable parameter budget.

\section{Conclusion}
\vskip -0.1in
In this work, we have proposed PSOFT, a novel PEFT framework that confines OFT to the principal subspace with theoretical guarantees, while enhancing practical adaptability through two tunable scaling vectors.
Extensive experiments demonstrate that PSOFT introduces a low-rank perspective that resolves the tension between expressiveness and multi-dimensional efficiency in OFT, bridges the gap between orthogonal fine-tuning and low-rank adaptation within the broader PEFT landscape, and offers a solution with superior scalability and practicality for adapting future foundation models.

\section*{Reproducibility Statement}
We are committed to ensuring the reproducibility of our work and have taken the following steps. 
For the proposed method, we provide source code at \url{https://github.com/fei407/PSOFT}. 
For theoretical results, we include formal statements and complete mathematical proofs in Appendix~\ref{appendix:proof}. 
For datasets and experimental settings, we offer detailed descriptions and full hyperparameter configurations in Appendices~\ref{appendix:glue}, \ref{appendix:vtab}, \ref{appendix:math}, and \ref{appendix:common}.

%% \citet: authors or publication are included in the sentence.
%% \citep: authors or publication are NOT included in the sentence.

%% \footnote{Sample of the first footnote}

% \begin{figure}[h]
% \begin{center}
% %\framebox[4.0in]{$\;$}
% \fbox{\rule[-.5cm]{0cm}{4cm} \rule[-.5cm]{4cm}{0cm}}
% \end{center}
% \caption{Sample figure caption.}
% \end{figure}

% \begin{table}[t]
% \caption{Sample table title}
% \label{sample-table}
% \begin{center}
% \begin{tabular}{ll}
% \multicolumn{1}{c}{\bf PART}  &\multicolumn{1}{c}{\bf DESCRIPTION}
% \\ \hline \\
% Dendrite         &Input terminal \\
% Axon             &Output terminal \\
% Soma             &Cell body (contains cell nucleus) \\
% \end{tabular}
% \end{center}
% \end{table}

\section*{Acknowledgments}
This work was supported in part by UK Research and Innovation (UKRI) Grant No. EP/X038866/1 and Horizon Europe Grant No. 101086159.

\bibliography{iclr2026_conference}
\bibliographystyle{iclr2026_conference}

\newpage
\appendix
\section*{Organization of the Appendix}
The appendix is organized as follows:

\begin{itemize}[leftmargin=*]
\item Appendix~\ref{appendix:alg} introduces the algorithm of the proposed PSOFT.
\item Appendix~\ref{appendix:proof} provides the theoretical proof for the column-wise angle and norm preservation theorem.
\item Appendix~\ref{appendix:cayley} presents theoretical details of the Cayley parameterization.
\item Appendix~\ref{appendix:params} compares the number of trainable parameters across popular PEFT methods.
\item Appendix~\ref{appendix:activation} analyzes activation memory statistics for different PEFT methods.
\item Appendix~\ref{appendix:glue} outlines experimental details for natural language understanding on GLUE.
\item Appendix~\ref{appendix:vtab} covers experimental details for visual classification on VTAB-1K.
\item Appendix~\ref{appendix:math} reports experimental details for mathematical question answering on MetaMathQA-40K.
\item Appendix~\ref{appendix:common} describes experimental details for commonsense reasoning on Commonsense-15K.
\item Appendix~\ref{appendix:extension} details extended experiments on the effects of SVD initialization, different rank settings, inserted modules, and Neumann terms.
\item Appendix~\ref{appendix:angles} illustrates the angular structure of the weight changes before and after fine-tuning.
\item Appendix~\ref{appendix:loss} analyzes the difference between PSOFT and full-space OFT in terms of their optimization dynamics and training loss trajectories.
\item Appendix~\ref{appendix:memory_usage} provides the additional memory usage experiments covering a single linear layer, a Transformer block, and end-to-end models.
\item Appendix~\ref{appendix:llm_usage} explains the use of large language models in this paper.
\end{itemize}

%------------------------------------------------------------------------
\section{Algorithm of the proposed PSOFT}
\label{appendix:alg}

For completeness, we provide a detailed description of the proposed PSOFT framework, which corresponds to Algorithm~\ref{alg:PSOFT}.  
For initialization, the orthogonal matrix $\mR$ is set to the identity matrix $\mI_r$, while PSOFT further introduces two additional vectors, $\bm{\alpha}$ and $\bm{\beta}$, both initialized as all ones.  
Before training begins, a singular value decomposition (SVD) is performed once to extract the top-$r$ singular values and vectors, which are then used to construct the matrices $\mA^{\prime}$, $\mB^{\prime}$, and the residual weights $\mW_{\text{res}}$.  
During training, the forward computation follows Eq.~\ref{forward_PSOFT-ro}, and the gradients of both $\mR$ and the vectors $\bm{\alpha}$ and $\bm{\beta}$ are updated jointly to obtain the final weights $\mW_{\text{final}}$.  

\begin{algorithm}[th]
\caption{PSOFT: orthogonal fine-tuning in the principal subspace}
\label{alg:PSOFT}
\begin{algorithmic}[1]
\State \textbf{Input:} Pre-trained weight matrix $\bm{W}_{\text{pre}} \in \mathbb{R}^{d \times n}$, rank $r$, input $x$, and number of epochs $E$
\State \textbf{Output:} Fine-tuned orthogonal matrix $\bm{R}$, two vectors $\bm{\alpha}$ and $\bm{\beta}$, and final weight matrix $\bm{W}_{\text{final}}$

\State \textbf{Initialize:} Orthogonal matrix: $\bm{R} \gets \bm{I}_r$, two vectors: $\bm{\alpha} \gets \mathbf 1_r$, $\bm{\beta} \gets \mathbf 1_r$

\State \textbf{Pre-compute:} 
\State \quad $\bm{W}_{\text{pre}} = \bm{U} \bm{S} \bm{V}^\top$, $\bm{A}^{\prime} \gets \bm{U}_{[:, :r]} $, $\bm{B}^{\prime} \gets \bm{S}_{[:r,:r]} \bm{V}_{[:, :r]}^\top$, $\bm{W}_{\text{res}} \gets \bm{U}_{[:, r:]} \bm{S}_{[r:, r:]} \bm{V}_{[:, r:]}^\top$

\For{$\text{epoch} = 1$ to $E$}
  \For{each mini-batch $x$}
    \State $ \vh = (\mA^{\prime} \operatorname{diag}(\bm{\alpha}) \mR \operatorname{diag}(\bm{\beta}) \mB^{\prime} + \mW_{\text{res}})^\top \vx,$
    \State compute $\frac{\partial \mathcal{L}}{\partial \bm{R}}, \frac{\partial \mathcal{L}}{\partial \bm{\alpha}}, \frac{\partial \mathcal{L}}{\partial \bm{\beta}}$, then update $\bm{R} \gets \bm{R} - \eta \cdot \frac{\partial \mathcal{L}}{\partial \bm{R}}$, 
    $\bm{\alpha} \gets \bm{\alpha} - \eta \cdot \frac{\partial \mathcal{L}}{\partial \bm{\alpha}}$,
    $\bm{\beta} \gets \bm{\beta} - \eta \cdot \frac{\partial \mathcal{L}}{\partial \bm{\beta}}$
  \EndFor
\EndFor

\State \textbf{Reconstruct:} $\bm{W}_{\text{final}} \gets \mA^{\prime} \operatorname{diag}(\bm{\alpha}) \mR \operatorname{diag}(\bm{\beta}) \mB^{\prime} + \mW_{\text{res}}$
\end{algorithmic}
\end{algorithm}

%------------------------------------------------------------------------
\section{Proof for the Angle and Norm Preservation Theorem}
\label{appendix:proof}

\begin{theorem}[Formal: Column-wise angle and norm preservation in the low-rank subspace]
\label{thm:formal}
Let \( \bm{W}_{\text{pri}}=\bm A\bm B\in\mathbb{R}^{d\times n} \) and 
\( \bm{W}_{\text{ps-tuned}}=\bm A\bm R\bm B\in\mathbb{R}^{d\times n} \),
with \( \bm A\in\mathbb{R}^{d\times r} \), \( \bm B\in\mathbb{R}^{r\times n} \).
Assume \( \operatorname{rank}(\bm A)=\operatorname{rank}(\bm B)=r \) and every column \( \bm b_i\neq\mathbf 0 \) (so all angles are well-defined). 
Let \( \bm G:=\bm A^\top\bm A\), $\mG$ is symmetric positive definite, 
\( \bm w_i^{\text{pri}}:=\bm A\bm b_i \),
\( \bm w_i^{\text{ps-tuned}}:=\bm A\bm R\bm b_i \),
and denote by \( \theta^{\text{pri}}_{ij} \) (resp.\ \( \theta^{\text{ps-tuned}}_{ij} \)) the angle between 
\( \bm w_i^{\text{pri}},\bm w_j^{\text{pri}} \) (resp.\ \( \bm w_i^{\text{ps-tuned}},\bm w_j^{\text{ps-tuned}} \)).
Then
\begin{equation}
\bm R^\top\bm G\bm R=\bm G
\ \Longleftrightarrow\
\big(\forall i\neq j,\ \theta^{\text{ps-tuned}}_{ij}=\theta^{\text{pri}}_{ij}\big)\ \text{and}\
\big(\forall i,\ \|\bm w_i^{\text{ps-tuned}}\|=\|\bm w_i^{\text{pri}}\|\big).
\label{constrains}
\end{equation}
\end{theorem}

\begin{proof}
For any pair of column indices \( i \neq j \), the cosines of the angles between 
the vectors in principal weights \((\bm{w}_i^{\text{pri}},\bm{w}_j^{\text{pri}})\) 
and the vectors in fine-tuned weights \((\bm{w}_i^{\text{ps-tuned}},\bm{w}_j^{\text{ps-tuned}})\) are
\[
  \cos\theta^{\text{pri}}_{ij}
  =\frac{\bm b_i^{\top}\bm G\bm b_j}
         {\sqrt{\bm b_i^{\top}\bm G\bm b_i}\,\sqrt{\bm b_j^{\top}\bm G\bm b_j}},
  \qquad
  \cos\theta^{\text{ps-tuned}}_{ij}
  =\frac{\bm b_i^{\top}\bm R^{\top}\bm G\bm R\bm b_j}
         {\sqrt{\bm b_i^{\top}\bm R^{\top}\bm G\bm R\bm b_i}\,
          \sqrt{\bm b_j^{\top}\bm R^{\top}\bm G\bm R\bm b_j}} .
\]
Moreover, for any \(i\),
\[
\|\bm w_i^{\text{pri}}\|^2 
= \bm b_i^{\top}\bm G \bm b_i,
\qquad
\|\bm w_i^{\text{ps-tuned}}\|^2 
= \bm b_i^{\top}\bm R^{\top}\bm G \bm R \bm b_i.
\]

\textbf{Sufficiency.}
If \(\bm R^{\top}\bm G\bm R=\bm G\), then the two cosine expressions coincide for 
every \(i\neq j\), hence 
\(\cos \theta^{\text{ps-tuned}}_{ij}=\cos\theta^{\text{pri}}_{ij}\). 
Since all angles lie in \([0,\pi]\) where the cosine is strictly decreasing, 
we obtain \(\theta^{\text{ps-tuned}}_{ij}=\theta^{\text{pri}}_{ij}\).
Similarly,
\(\|\bm w_i^{\text{ps-tuned}}\|^2 = \bm b_i^{\top}\bm G \bm b_i = \|\bm w_i^{\text{pri}}\|^2\),
so \(\|\bm w_i^{\text{ps-tuned}}\| = \|\bm w_i^{\text{pri}}\|\).

\textbf{Necessity.}
Conversely, assume that 
\(\theta^{\text{ps-tuned}}_{ij}=\theta^{\text{pri}}_{ij}\) for all \(i\neq j\) 
and \(\|\bm w_i^{\text{ps-tuned}}\|=\|\bm w_i^{\text{pri}}\|\) for all \(i\).
Define \(\bm M := \bm R^{\top}\bm G\bm R - \bm G\).
From norm preservation we obtain
\[
\bm b_i^\top \bm M \bm b_i = 0, \quad \forall i,
\]
Since \(\bm b_i\neq \mathbf{0}\) and \(\bm G\succ 0\), both denominators in the cosine formulas are equal and positive; hence angle preservation implies
\[
\bm b_i^\top \bm M \bm b_j = 0 \quad \forall i\neq j.
\]
Thus \( \bm B^{\top} \bm M \bm B = \bm 0 \) with \( \operatorname{rank}(\bm B)=r \).
Because \( \bm B \) has full row rank, it admits a right inverse 
\( \bm C\in\mathbb{R}^{n\times r} \) (e.g., \( \bm C=\bm B^\top(\bm B\bm B^\top)^{-1} \)) such that \( \bm B\bm C=\bm I_r \).
Multiplying gives
\[
\bm M=\bm C^\top(\bm B^\top\bm M\bm B)\bm C=\bm 0,
\]
hence \( \bm R^\top\bm G\bm R=\bm G \).

\end{proof}

%------------------------------------------------------------------------
\section{Cayley Parameterization}
\label{appendix:cayley}
The Cayley parameterization \citep{cayley1894collected} is a mapping that converts real skew-symmetric matrices into orthogonal matrices. For a real skew-symmetric matrix \( \mQ \) (\emph{i.e.,} \( \mQ^{\top} = -\mQ \)), the Cayley transform is defined as:
\[
  \mR = (\mI - \mQ)(\mI + \mQ)^{-1},
\]

where \( \mI \) is the identity matrix of the same size as \( \mQ \) and matrix \( \mR \) does not have -1 as an eigenvalue.

% The standard Cayley parameterization is defined as:
% \(C = (I - A)(I + A)^{-1}\) . However, the Cayley parameterization in this paper is written with the order of the factors reversed.

The Cayley transform provides a way to parameterize orthogonal matrices near the identity matrix using skew-symmetric matrices. The orthogonality of the Cayley transform is proved as follows.

\begin{theorem}
If \( \mQ \) is a real skew-symmetric matrix and \( (\mI + \mQ) \) is invertible, then the Cayley transform \( \mR = (\mI - \mQ)(\mI + \mQ)^{-1} \) is an orthogonal matrix.
\end{theorem}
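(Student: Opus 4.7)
The plan is to verify orthogonality directly from the definition $C^\top C = I$, exploiting the skew-symmetry of $A$ together with the fact that $(I-A)$ and $(I+A)$ commute as polynomials in $A$.

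First I would compute the transpose. Using the identities $(XY)^\top = Y^\top X^\top$ and $(X^{-1})^\top = (X^\top)^{-1}$, I get
\begin{equation}
C^\top = \bigl[(I+A)^{-1}\bigr]^\top (I-A)^\top = \bigl[(I+A)^\top\bigr]^{-1}(I-A)^\top.
\end{equation}
Invoking $A^\top = -A$ then gives $(I+A)^\top = I-A$ and $(I-A)^\top = I+A$, hence $C^\top = (I-A)^{-1}(I+A)$.

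Next I would form the product
\begin{equation}
C^\top C = (I-A)^{-1}(I+A)(I-A)(I+A)^{-1}.
\end{equation}
The key observation is that $(I+A)$ and $(I-A)$ commute, since both are polynomials in the single matrix $A$; indeed $(I+A)(I-A) = I - A^2 = (I-A)(I+A)$. Swapping the two middle factors yields $C^\top C = (I-A)^{-1}(I-A)(I+A)(I+A)^{-1} = I$, which establishes orthogonality. A symmetric computation shows $CC^\top = I$, so $C \in \mathrm{O}(d)$.

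The argument is essentially mechanical, so there is no real obstacle; the only subtlety worth flagging is that the inverse $(I+A)^{-1}$ is legitimate by the standing hypothesis, and the commutativity step relies solely on $(I+A)$ and $(I-A)$ being polynomials in $A$ rather than on any spectral property of $A$. This keeps the proof coordinate-free and avoids any appeal to diagonalization of skew-symmetric matrices.
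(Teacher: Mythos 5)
Your proof is correct and follows essentially the same route as the paper's: compute $C^\top C$, use $A^\top=-A$ to rewrite the transposed factors, and swap $(I+A)$ and $(I-A)$ by their commutativity (both being polynomials in $A$) to collapse the product to $I$. The only difference is presentational — you justify the commutation explicitly and note that $(I-A)^{-1}$ exists because $(I-A)=(I+A)^\top$, points the paper uses implicitly.
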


\begin{proof}
We aim to proof that the matrix \(\mR\) after Cayley transform satisfies \( \mR^{\top} \mR = \mR \mR^{\top} = \mI \).

To compute \( \mR^{\top}\mR \):
\[
\begin{aligned}
\mR^{\top}\mR &= \left( (\mI - \mQ)(\mI + \mQ)^{-1} \right)^{\top} \left( (\mI - \mQ)(\mI + \mQ)^{-1} \right) \\
&= \left( (\mI + \mQ)^{-1} \right)^{\top} (\mI - \mQ)^{\top} (\mI - \mQ)(\mI + \mQ)^{-1} \\
&= \left( (\mI + \mQ)^{\top} \right)^{-1} (\mI - \mQ)^{\top} (\mI - \mQ)(\mI + \mQ)^{-1} \\
\end{aligned}
\]
By the definition of skew-symmetry, \( \mQ^{\top} = -\mQ \), \\
\[
= (\mI - \mQ)^{-1} (\mI + \mQ) (\mI - \mQ)(\mI + \mQ)^{-1} \\
\]
Since \( \left(\mI+\mQ\right) \) and \( \left(\mI-\mQ\right) \) are commute, we can switch the order of the factors:  \\
\[
\begin{aligned}
&= (\mI - \mQ)^{-1}(\mI - \mQ) (\mI + \mQ)(\mI + \mQ)^{-1} \\
&= \mI \\
\end{aligned}
\]

Similarly, it can be proven that \(  \mR\mR^{\top} = \mI \). Therefore, the result of Cayley transform \( \mR = (\mI - \mQ)(\mI + \mQ)^{-1} \) is an orthogonal matrix.
\end{proof}

In this paper, PSOFT leverages the Cayley parameterization to construct orthogonal matrices with approximately half the number of trainable parameters compared to a full orthogonal matrix, while rigorously preserving orthogonality.

%------------------------------------------------------------------------

\section{Comparison of Trainable Parameters for PEFT Methods}
\label{appendix:params}

Table~\ref{app:tab-params} reports the number of trainable parameters across representative PEFT methods. Most existing approaches scale their parameter counts with hidden layer dimensions, which constrains their applicability to larger models. In contrast, PSOFT and LoRA-XS decouple the number of trainable parameters from layer width. PSOFT further reduces parameter complexity through the Cayley parameterization, which requires only \(r(r-1)/2\) parameters to represent an orthogonal matrix. Consequently, the total number of trainable parameters in PSOFT remains fixed for a given rank \(r\), allowing fine-grained control over parameter budgets. Moreover, PSOFT introduces two learnable scaling vectors within the subspace, contributing merely \(2r\) additional parameters, which is negligible compared with other methods.

\begin{table}[htbp]
\caption{
Comparison of trainable parameters for different PEFT methods within a single linear layer, assuming input/output dimensions \( d \) and \( n \), respectively. Here, \( r \) denotes the low-rank dimension, \( m \) the number of butterfly factors in BOFT, \( b \) the block size in BOFT, \( d_{\text{min}} = \min(d, n) \), and \( k \) the number of additional off-diagonals in SVFT. All statistics are based on implementations from the HuggingFace's \texttt{PEFT} library~\citep{peft}.
}
\label{app:tab-params}
\begin{center}
\begin{tabular}{lc}
\toprule
\textbf{Method} & \textbf{Number of Trainable Parameters} \\
\midrule
LoRA            & \( d \times r + r \times n \) \\
DoRA            & \( d \times r + r \times n + n \) \\
VeRA            & \( r + n \) \\
OFT             & \( r \times (d/r) \times (d/r) + n \) \\
BOFT            & \( m \times (d/b) \times b^2 + n \) \\
SVFT            & \( d_{\text{min}} \times k + (d_{\text{min}} - k)(k + 1) \) \\
LoRA-XS         & \( r \times r \) \\
PSOFT (Ours)     & \( r(r - 1)/2 + 2r \) \\
\bottomrule
\end{tabular}
\end{center}
\end{table}

%------------------------------------------------------------------------

\section{The Activation Memory Statistics across Different PEFT Methods}
\label{appendix:activation}
In this section, we analyze the activation memory requirements of various PEFT methods during fine-tuning. In transformer-based networks, memory usage primarily arises from three sources: \textit{pre-trained weight storage}, \textit{activation storage}, and \textit{gradient/optimizer state storage}. Activation storage refers to intermediate values created during the forward pass and retained for gradient computation during backpropagation.
Different PEFT methods consume comparable amounts of memory for weights, gradients, and optimizer states, as they all involve a substantially reduced number of trainable parameters~\citep{hu2022lora, balazy2024lora-xs, kopiczko2024vera}. In contrast, their activation memory consumption exhibits clear differences. As the batch size increases, activation storage gradually becomes the dominant memory bottleneck, as illustrated in Figure~\ref{memory_bs}.
Notably, activation memory in transformer layers accounts for over 99.9\% of the total activation memory across all layers~\citep{korthikanti2023reducing}. We therefore focus our analysis on the activation storage of transformer layers.

\begin{figure}[htbp]
\begin{center}
\centerline{\includegraphics[width=\columnwidth]{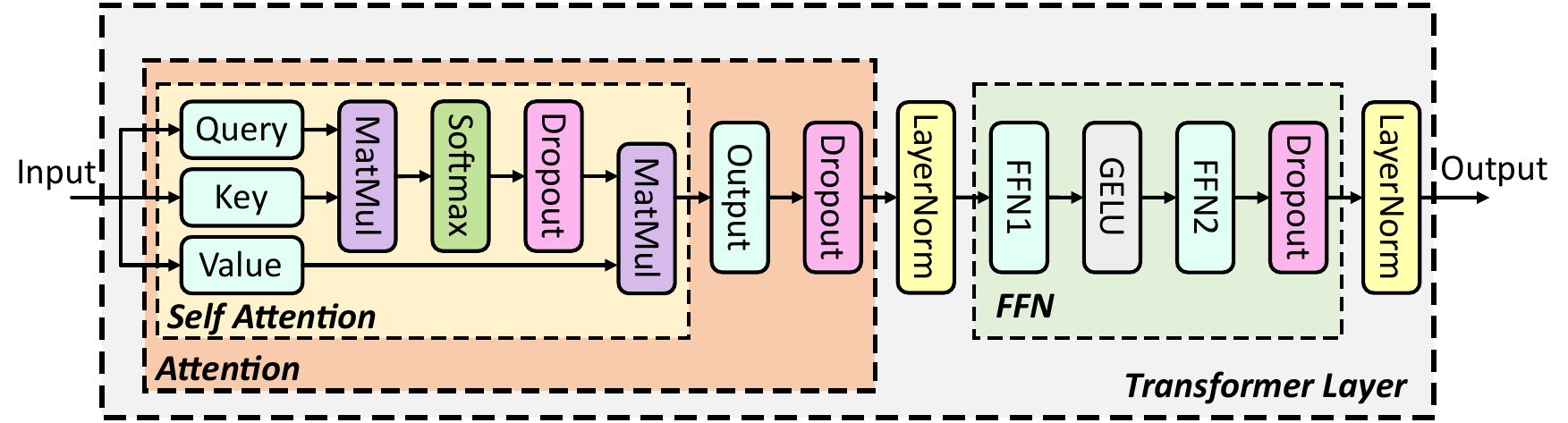}}
\caption{The architecture of a single transformer layer, including the attention layer and the feed forward network layer and self attention layer.  }
\label{app:fig-transformer-layer}
\end{center}
\end{figure}

In this study, we consider the transformer layers within an encoder or decoder, where the input has dimensions $b \times s \times h$, where $b$ denotes the micro-batch size, $s$ represents the maximum sequence length, and $h$ indicates the hidden dimension size. Each transformer layer consists of a self-attention layer with $a$ attention heads, and in the feed-forward network (FFN) layer, the hidden dimension is expended to $4h$ before being projected back to $h$. We assume that activations are stored in 32-bit floating-point format, requiring 4 bytes of memory. All results in this section are reported in bytes unless otherwise specified.

\begin{figure}[ht]
\begin{center}
\centerline{\includegraphics[width=\columnwidth]{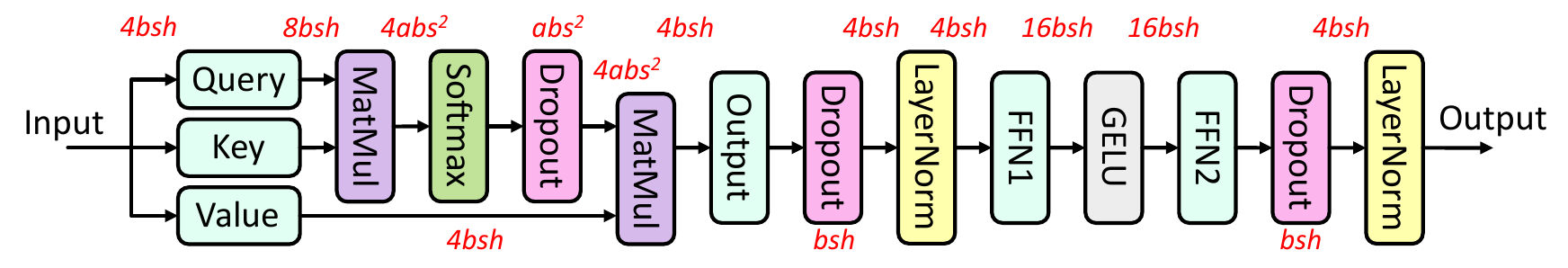}}
\caption{Activation memory statistics in a single transformer layer for full fine-tuning.}
\label{app:fig-base-activation}
\end{center}
\end{figure}

As illustrated in Figure~\ref{app:fig-transformer-layer}, each transformer layer consists of a self-attention block (including Query, Key, and Value matrices) combined with an output linear layer to form the attention block. Additionally, it includes two FFN layers, two normalization layers, and three dropout layers. Building on prior work \citep{korthikanti2023reducing}, we derive an approximate formula for the activation memory required during the forward pass of a single transformer layer. For backpropagation, we consider the input of each module (which serves as the output for the subsequent module) as activations. As illustrated in Figure~\ref{app:fig-base-activation}, the activation memory includes the following components:

\begin{figure}[ht]
\begin{center}
\centerline{\includegraphics[width=0.8\columnwidth]{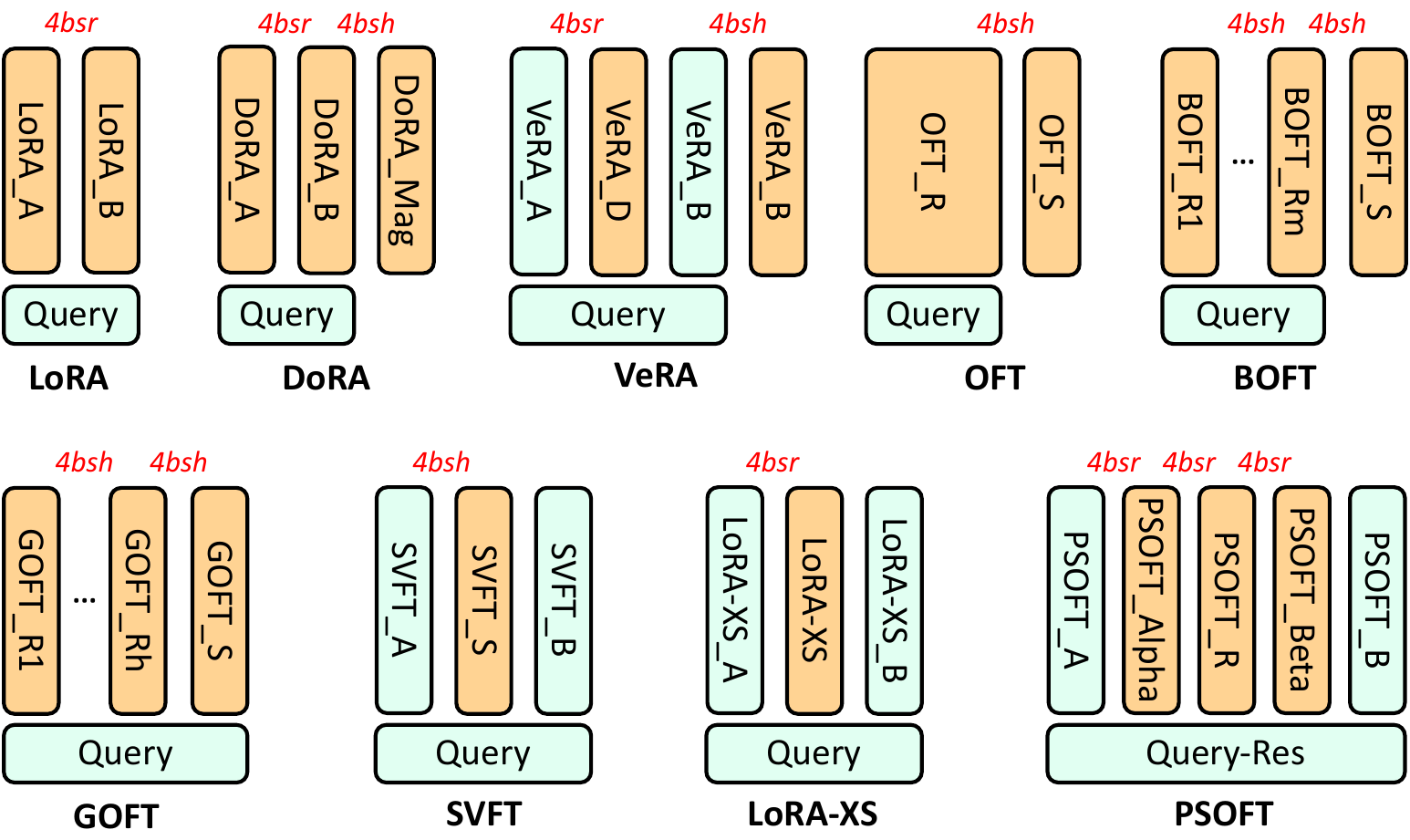}}
\caption{Activation memory statistics in a single linear layer (Query) across different PEFT methods.}
\label{app:fig-peft-activation}
\end{center}
\end{figure}

\textbf{Self-Attention:}
\begin{itemize}[itemsep=1pt,topsep=1pt]
    \item Query \((Q)\), Key \((K)\), and Value \((V)\) matrices: Require $4bsh$ for their shared inputs.
    \item First MatMul: Requires $8bsh$ as input to the module.
    \item Softmax: Requires $4abs^2$ for activation storage.
    \item Self-attention dropout: Only the mask is stored, with a size of $abs^2$.
    \item Second MatMul: Requires activations from the output of dropout ($4abs^2$) and linear layer Value ($4bsh$), totaling $4abs^2 + 4bsh$.
\end{itemize}

\textbf{Attention:}
\begin{itemize}[itemsep=1pt,topsep=1pt]
    \item Output linear layer: Requires $4bsh$ as input.
    \item Attention dropout: Only the mask is stored, with a size of $bsh$.
    \item First layer normalization: Requires $4bsh$ for activation storage.
\end{itemize}

\textbf{FFN:}
\begin{itemize}[itemsep=1pt,topsep=1pt]
    \item FFN1: Requires $4bsh$ as input.
    \item GELU activation: Requires $16bsh$ for activation storage.
    \item FFN2: Requires $16bsh$ as input.
    \item FFN dropout: Only the mask is stored, with a size of $bsh$.
    \item Second layer normalization: Requires $4bsh$ for activation storage.
\end{itemize}

Summing these sub-layers, the total activation storage for a single transformer layer is:
\begin{equation}
ACT_{base} = 66bsh + 9abs^2   
\end{equation}

The six linear layers within a transformer layer undergo changes in activation memory storage when different PEFT methods are applied, as reflected by modifications to the base formula (\(ACT_{base}\)). For example, the LoRA method introduces a set of low-rank matrices \(B\) and \(A\) in parallel. The activation memory requirements for various PEFT methods in a single linear layer are summarized in Figure~\ref{app:fig-peft-activation}, with the Query matrix as a representative example. The specific details of these changes are as follows:

\begin{itemize}[itemsep=1pt,topsep=1pt]
    \item \textbf{LoRA}: Adds $4bsr$ to the original activation storage for gradient computation during backpropagation.
    \item \textbf{DoRA}: Adds $4bsr + 4bsh$ to the original activation storage.
    \item \textbf{VeRA}: Replaces the original input $4bsh$ with $4bsr$ and adds $4bsh$ for activation storage.
    \item \textbf{OFT}: Adds $4bsh$ to the original activation storage.
    \item \textbf{BOFT}: Requires an additional $4mbsh$, where $m$ is the number of sparse matrices.
    \item \textbf{GOFT}: Adds $4bsh \log h$, where $h$ is the hidden layer dimension.
    \item \textbf{SVFT}: Removes the original input activation storage and adds $4bsh$.
    \item \textbf{LoRA-XS}: Removes the original input activation storage and adds $4bsr$.
    \item \textbf{PSOFT}: Removes the original input activation storage and adds $12bsr$.
\end{itemize}

\begin{table}[ht]
\caption{Total activation memory statistics in a single transformer layer for different PEFT methods and FFT. In BOFT, \(m\) denotes the number of sparse matrices.}
\label{app:tab-peft-activation}
\begin{center}
\begin{tabular}{lll}
\toprule
\textbf{Methods} & \textbf{Activation memory (Relative)} & \textbf{Activation memory (Absolute)}   \\
\midrule
FFT & \(ACT_{base}\)                   & \(66bsh+9abs^2\)                   \\
LoRA    & \(ACT_{base}+24bsr\)             & \(66bsh+24bsr+9abs^2\)             \\
DoRA    & \(ACT_{base}+24bsr+36bsh\)       & \(102bsh+24bsr+9abs^2\)            \\
VeRA    & \(ACT_{base}-28bsh+16bsr+36bsh\) & \(74bsh+16bsr+9abs^2\)             \\
OFT     & \(ACT_{base}+36bsh\)             & \(102bsh+9abs^2\)                  \\
BOFT    & \(ACT_{base}+36mbsh\)            & \(66bsh+36mbsh+9abs^2\)            \\
GOFT    & \(ACT_{base}+36bsh\log h\)       & \(66bsh+36bsh\log h+9abs^2\)            \\
SVFT    & \(ACT_{base}-28bsh+24bsh\)       & \(62bsh+9abs^2\)                   \\
LoRA-XS & \(ACT_{base}-28bsh+24bsr\)       & \(38bsh+24bsr+9abs^2\)             \\
PSOFT & \(ACT_{base}-28bsh+72bsr\)       & \(38bsh+72bsr+9abs^2\)             \\
\bottomrule
\end{tabular}
\end{center}
\end{table}

The activation memory requirements of various PEFT methods for a single transformer layer are summarized in Table~\ref{app:tab-peft-activation}.  Notably, PSOFT incurs significantly lower activation memory than all other methods except LoRA-XS. Its activation memory is comparable to that of LoRA-XS, as the rank \(r\) is much smaller than the hidden dimension \(h\) (\(r \ll h\)).
A key observation is that PSOFT employs scale vectors to enhance task-specific flexibility, similar to other orthogonal fine-tuning methods \citep{qiu2023controlling, liu2024parameter, ma2024parameter}. However, unlike these methods, PSOFT applies the scale vectors within a principal subspace, effectively preventing a substantial increase in activation memory usage.

%------------------------------------------------------------------------

\section{Natural Language Understanding on GLUE}
\label{appendix:glue}
\subsection{Datasets}
The General Language Understanding Evaluation (GLUE) \citep{wang2019glue} is a comprehensive benchmark for evaluating the performance of natural language understanding (NLU) models across diverse tasks. It includes one text similarity task (SST-B), five pairwise text classification tasks (MNLI, RTE, QQP, MRPC, and QNLI), and two single-sentence classification tasks (CoLA and SST).

\begin{table}[htbp]
\caption{Hyperparameter settings for fine-tuning DeBERTaV3-base on GLUE}
\label{appendix-tab:hyper-glue}
\begin{center}
\begin{tabular}{lccccccc}
\toprule
\textbf{Hyperparameter} & \textbf{CoLA} & \textbf{STS-B} & \textbf{MRPC} & \textbf{RTE} & \textbf{SST-2} & \textbf{QNLI} \\
%\textbf{QNLI} & \textbf{MNLI} & \textbf{QQP} \\
\midrule
Optimizer & \multicolumn{6}{c}{AdamW} \\
Warmup Ratio         & \multicolumn{6}{c}{0.1} \\
LR Schedule          & \multicolumn{6}{c}{Linear} \\
Learning Rate (Head) & \multicolumn{6}{c}{5E-04} \\
Batch Size           & \multicolumn{6}{c}{32} \\
\midrule
Max Seq. Len.        & 64 & 128 & 256 & 256 & 128 & 256   \\
\#Epochs             & 20 & 20  & 30  & 30  & 10  &  5  \\
\midrule
LR PSOFT$_{r=46}$  & 6E-04 & 4E-04 & 4E-04 & 4E-04 & 2E-04 & 4E-04 \\
\bottomrule
\end{tabular}
\end{center}
\end{table}

\subsection{Implementation Details}
While it is common in prior PEFT studies~\citep{hu2022lora, lingam2024svft, liu2024dora, meng2024pissa} to report results on the GLUE validation set, concerns have been raised regarding the reliability of this protocol~\citep{wu2024advancing, wu2024reft, bini2025decoupling}.  
To ensure a more rigorous evaluation, we evenly split the original validation set into new validation and test subsets using a fixed random seed.  
All reported results are based on the test set, with checkpoints selected according to the best accuracy on the new validation set.  
Given the prohibitive computational cost of evaluating every baseline across all GLUE datasets, we omit the two largest subsets ( MNLI and QQP) from our experiments. The peak memory usage during training is measured using \texttt{torch.cuda.max\_memory\_allocated()}. 

All experiments are implemented on top of the open-source LoRA framework~\citep{hu2022lora}, using \texttt{PyTorch}~\citep{paszke2019pytorch} and Huggingface’s \texttt{PEFT} library~\citep{peft}.  
Following \citet{liu2024parameter}, we tune only model-agnostic hyperparameters such as learning rate and training epochs.  
Due to resource constraints, we set the maximum sequence length to 256.  
PSOFT is applied to all linear layers of the DeBERTaV3-base model.  
Evaluation metrics include Matthew’s correlation for CoLA, Pearson correlation for STS-B, and accuracy for the other GLUE sub-tasks.  
Detailed hyperparameter configurations are provided in Table~\ref{appendix-tab:hyper-glue}.

%------------------------------------------------------------------------
\section{Visual Classification on VTAB-1K}
\label{appendix:vtab}
\subsection{Datasets}
The Visual Task Adaptation Benchmark (VTAB-1K) \citep{zhai2019large} comprises 19 image classification tasks grouped into three categories: natural, specialized, and structured.

\begin{itemize}
    \item \textbf{Natural tasks} involve images captured with standard cameras, depicting scenes from the natural environment, generic objects, fine-grained categories, or abstract concepts.
    \item \textbf{Specialized tasks} use images obtained through specialized equipment, such as medical imaging devices or remote sensing technologies.
    \item \textbf{Structured tasks} focus on artificially designed scenarios to analyze specific relationships or changes between images, such as estimating object distances in 3D scenes (e.g., DMLab), counting objects (e.g., CLEVR), or detecting orientations (e.g., dSprites for disentangled representations).
\end{itemize}

In VTAB-1K, each dataset provides 800 labeled samples from its original training set, which are used to fine-tune the base model. Additionally, 200 labeled samples in the validation set adjust hyperparameters during fine-tuning. The performance is evaluated using Top-1 classification accuracy on the respective original test set.

\begin{table}[htbp]
\caption{Hyperparameter settings for fine-tuning ViT-B/16 on VTAB-1K}
\label{app:tab-hyper-vtab}
\begin{center}
\begin{tabular}{lcc}
\toprule
\textbf{Hyperparameter}   & \textbf{ViT-B/16} \\
\midrule
Optimizer                 & \multicolumn{1}{c}{AdamW}                     \\
Warmup Ratio              & \multicolumn{1}{c}{0.1}                       \\
LR Schedule               & \multicolumn{1}{c}{Cosine}                    \\
Learning Rate (Head)      & \multicolumn{1}{c}{5E-03}                     \\
Batch Size                & \multicolumn{1}{c}{64}                        \\
Weight Decay              & \multicolumn{1}{c}{1E-03}                     \\
Dropout                   & \multicolumn{1}{c}{1E-01}                     \\
\#Epochs                  & \multicolumn{1}{c}{50}                        \\
\midrule
LR PSOFT$_{r=46}$    & \{5E-04, 1E-03, 5E-03\}                         \\
\bottomrule
\end{tabular}
\end{center}
\end{table}

\subsection{Implementation Details}
Our experiments are conducted in \texttt{PyTorch}~\citep{paszke2019pytorch} using HuggingFace’s \texttt{Datasets}, \texttt{Transformers}, and \texttt{PEFT}~\citep{peft} libraries.  
Unlike prior works that rely on the \texttt{Timm} framework with custom preprocessing and training loops~\citep{liu2024parameter, ma2024parameter}, our framework leverages standardized APIs such as \texttt{AutoImageProcessor} and \texttt{Trainer}, eliminating manual dataset/model handling and enabling fast integration of advanced methods (e.g., DoRA~\citep{liu2024dora}, SVFT~\citep{lingam2024svft}, BOFT~\citep{liu2024parameter}).  

We adopt the experimental settings from~\citep{liu2024parameter, ma2024parameter}, adjusting learning rates, weight decay, and training epochs accordingly.  
Following~\citep{balazy2024lora-xs, kopiczko2024vera, lingam2024svft}, we separate learning rates for the classification head and PEFT modules, with a fixed learning rate applied to the head across all methods.  
Complete hyperparameter configurations are listed in Table~\ref{app:tab-hyper-vtab}.  

%------------------------------------------------------------------------

\section{Mathematical Question Answering on MetaMathQA-40K}
\label{appendix:math}

\subsection{Datasets}
For mathematical question answering tasks, we fine-tune baselines using the MetaMathQA-40K dataset \citep{yu2024metamath} and evaluate their performance on the two challenge benchmarks: GSM-8K \citep{cobbe2021training} and MATH \citep{hendrycks2021measuring}.

\begin{table}[htbp]
\caption{Hyperparameter settings for fine-tuning on MetaMathQA-40K}
\label{appendix-tab:hyper-math}
\begin{center}
\begin{tabular}{lc}
\toprule
\textbf{Hyperparameter} & \multicolumn{1}{c}{\textbf{LLaMA-3.2-3B}} \\
\midrule
Optimizer       & \multicolumn{1}{c}{AdamW}     \\
Warmup Ratio    & \multicolumn{1}{c}{0.1}       \\
LR Schedule     & \multicolumn{1}{c}{Cosine}    \\
Max Seq. Len.   & \multicolumn{1}{c}{512}       \\
Batch Size      & \multicolumn{1}{c}{64}        \\
\# Epochs       & \multicolumn{1}{c}{2}         \\
\midrule
LR PSOFT$_{r=168}$    &  4E-04  \\
LR PSOFT$_{r=362}$    &  2E-04  \\
\bottomrule
\end{tabular}
\end{center}
\end{table}

\subsection{Implementation Details}
Our experiments follow prior work~\citep{liu2024parameter, lingam2024svft} and are implemented in \texttt{PyTorch}~\citep{paszke2019pytorch} using HuggingFace’s \texttt{PEFT} library~\citep{peft}.  
Consistent with~\citep{lingam2024svft}, we tune only learning rates for different models, with full hyperparameters listed in Table~\ref{appendix-tab:hyper-math}.  
We adopt gradient accumulation with small batch sizes ($\leq 4$) to approximate large-batch training across all baselines.  

\begin{table}[ht]
\caption{Experimental results of fine-tuned LLaMA-3.2-3B on GSM-8K and MATH with extremely low parameter counts. The best result for each dataset is marked in \textbf{bold}. Accuracy (\%) is reported for both GSM-8K and MATH datasets.}
\label{appendix-tab:math}
\begin{center}
\begin{small}
\begin{tabular}{lccccc}
\toprule
\textbf{Methods} & \textbf{\#Params} & \textbf{Inserted Modules}  & \textbf{Mem (GB)} & \textbf{GSM-8K} & \textbf{MATH} \\
\midrule

GOFTv2              & 0.26M & Q,K,V & 75.3 & 41.02 & 9.22 \\
qGOFTv2             & 1.03M & Q,K,V & 75.3 & 42.46 & 9.32  \\
BOFT$^{b=2}_{m=2}$  & 1.18M & Q,K,V & 48.2 & 52.46 & 10.78 \\
\rowcolor{lightgreen}
PSOFT$_{r=168}$   & 1.20M & Q,K,V & 29.8 & 52.84 & 12.24 \\
\midrule
LoRA$_{r=1}$        & 0.40M & Q,K,V         & 30.1 & 47.23 & 10.36 \\
SVFT$_{P}$          & 0.49M & Q,K,V,U,D,O,G & 41.1 & 52.01 & 12.18 \\
LoRA-XS$_{r=48}$    & 0.45M & Q,K,V,U,D,O,G & 32.3 & 51.86 & 9.80  \\
\rowcolor{lightgreen}
PSOFT$_{r=72}$    & 0.53M & Q,K,V,U,D,O,G & 32.7 & 52.01 & 12.44 \\
\midrule
LoRA$_{r=1}$        & 1.52M & Q,K,V,U,D,O,G & 32.0 & 57.32 & 12.88 \\
PiSSA$_{r=1}$       & 1.52M & Q,K,V,U,D,O,G & 32.0 & 56.48 & 13.18  \\
LoRA-XS$_{r=88}$    & 1.52M & Q,K,V,U,D,O,G & 32.8 & 54.66 & 12.70  \\
\rowcolor{lightgreen}
PSOFT$_{r=124}$   & 1.54M & Q,K,V,U,D,O,G & 33.2 & 57.47 & 13.26 \\
DoRA$_{r=1}$        & 2.29M & Q,K,V,U,D,O,G  & 43.2 & 57.54 & 13.60 \\
\rowcolor{lightgreen}
PSOFT$_{r=152}$   & 2.31M & Q,K,V,U,D,O,G & 33.5 & \textbf{58.23} & \textbf{13.66} \\

\bottomrule
\end{tabular}
\end{small}
\end{center}
\end{table}

Beyond the main experiments, we provide additional evaluations of PEFT methods under constrained parameter budgets, as summarized in Table~\ref{appendix-tab:math}. 
When fine-tuned on the Q, K, and V modules, PSOFT achieves 10\% and 3\% higher accuracy than GOFTv2 and qGOFTv2 on GSM-8K and MATH, respectively, while using only 40\% of their memory.
On MATH, PSOFT also exceeds BOFT by 0.82\%/1.46\% with just 60\% of its memory usage.

PSOFT allows flexible control of parameter counts by adjusting the rank $r$, whereas LoRA is restricted to a minimum rank of 1, inherently tying its parameter count to hidden dimension size.  
Under stricter parameter budgets, LoRA must reduce the scope of inserted modules, often leading to performance degradation.  
In contrast, PSOFT consistently achieves superior performance even at extremely low parameter configurations.  
In terms of memory efficiency, PSOFT matches LoRA while outperforming DoRA and SVFT.  

%------------------------------------------------------------------------

\section{Commonsense Reasoning on Commonsense-15K}
\label{appendix:common}
\subsection{Datasets}
Commonsense reasoning benchmarks encompass eight distinct sub-tasks: BoolQ \citep{clark2019boolq}, PIQA \citep{bisk2020piqa}, SIQA \citep{sap2019socialiqa}, HellaSwag \citep{zellers2019hellaswag}, Winogrande \citep{sakaguchi2021winogrande}, ARC-easy/ARC-challenge \citep{clark2018think}, and OpenBookQA \citep{mihaylov2018can}. Following the approach described in \citep{hu2023llm, lingam2024svft, liu2024dora}, we also combine the training datasets from all eight tasks to construct a unified fine-tuning dataset, Commonsense-15K tailored for each task.

\begin{table}[htbp]
\caption{Hyperparameter settings for fine-tuning on Commonsense-15K}
\label{appendix-tab:hyper-common}
\begin{center}
\begin{tabular}{lc}
\toprule
\textbf{Hyperparameter} & \multicolumn{1}{c}{\textbf{LLaMA-3.1-8B}} \\
\midrule
Optimizer       & \multicolumn{1}{c}{AdamW}     \\
Warmup Steps    & \multicolumn{1}{c}{100}       \\
LR Schedule     & \multicolumn{1}{c}{Linear}    \\
Max Seq. Len.   & \multicolumn{1}{c}{512}       \\
Batch Size      & \multicolumn{1}{c}{64}        \\
\# Epochs       & \multicolumn{1}{c}{3}         \\
\midrule
LR PSOFT$_{r=194}$    & 4E-04 \\
LR PSOFT$_{r=424}$    & 1E-04 \\
\bottomrule
\end{tabular}
\end{center}
\end{table}

\begin{table}[ht]
\caption{Experimental results of fine-tuned LLaMA-3.1-8B on eight commonsense reasoning benchmarks with extremely low parameter counts. The best average result is highlighted in \textbf{bold}. Accuracy (\%) is reported for all sub-datasets.}
\label{appendix-tab:common}
\begin{center}
\setlength{\tabcolsep}{1pt} 
\renewcommand{\arraystretch}{1.0} 
\begin{small}
\resizebox{1.0\textwidth}{!}{
\begin{tabular}{lccccccccccccc>{\columncolor{gray!25}}c}
\toprule
\textbf{Methods} & \textbf{\#Params} & \makecell{\textbf{Inserted}\\ \textbf{Modules}} & \makecell{\textbf{Mem}\\ \textbf{(GB)}} & \textbf{BoolQ} & \textbf{PIQA} & \textbf{SIQA} & \textbf{HS} & \textbf{WG} & \textbf{ARC-e} & \textbf{ARC-c} & \textbf{OBQA} & \colorbox{lightorange}{\textbf{Avg.}} \\
\midrule

GOFTv2             & 0.26M & Q,V & OOM & \multicolumn{9}{c}{N/A.} \\ 
qGOFTv2            & 1.05M & Q,V & OOM & \multicolumn{9}{c}{N/A.} \\ 
BOFT$^{b=2}_{m=2}$  & 1.21M & Q,V & 79.4 & 69.66 & 83.95 & 71.65 & 80.87 & 70.01 & 90.40 & 77.82 & 79.00 & \colorbox{lightorange}{77.92} \\
\rowcolor{lightgreen}
PSOFT$_{r=194}$   & 1.22M & Q,V & 52.6 & 68.87 & 84.17 & 71.44 & 86.46 & 67.56 & 90.45 & 77.73 & 81.20 & \colorbox{lightorange}{78.49} \\
\midrule
LoRA$_{r=1}$        & 0.59M & Q,K,V & 52.8 & 66.97 & 83.08 & 71.03 & 77.06 & 64.01 & 90.70 & 77.39 & 78.80 & \colorbox{lightorange}{76.13} \\
SVFT$_{P}$          & 0.46M & Q,K,V,U,D & 65.8 & 65.08 & 81.07 & 69.40 & 85.69 & 68.82 & 88.47 & 77.05 & 76.00 & \colorbox{lightorange}{76.45} \\
LoRA-XS$_{r=48}$    & 0.37M & Q,K,V,U,D & 53.4 & 69.30 & 84.82 & 71.29 & 87.44 & 67.01 & 89.39 & 77.22 & 82.60 & \colorbox{lightorange}{78.63}  \\
\rowcolor{lightgreen}
PSOFT$_{r=72}$       & 0.43M & Q,K,V,U,D & 53.7 & 69.72 & 84.39 & 72.01 & 87.99 & 68.67 & 90.19 & 78.16 & 81.00 & \colorbox{lightorange}{79.02} \\
\midrule
LoRA$_{r=1}$        & 1.77M & Q,K,V,U,D & 53.9 & 71.13 & 85.31 & 74.67 & 89.08 & 72.61 & 90.24 & 78.16 & 82.40 & \colorbox{lightorange}{80.45} \\
PiSSA$_{r=1}$       & 1.77M & Q,K,V,U,D & 53.9 & 72.05 & 84.60 & 74.21 & 89.93 & 70.88 & 90.15 & 79.01 & 82.00 & \colorbox{lightorange}{80.35} \\
LoRA-XS$_{r=104}$   & 1.73M & Q,K,V,U,D & 54.0 & 71.04 & 85.47 & 72.67 & 89.26 & 71.74 & 90.82 & 79.61 & 83.20 & \colorbox{lightorange}{80.48} \\
\rowcolor{lightgreen}
PSOFT$_{r=146}$   & 1.74M & Q,K,V,U,D & 54.5 & 71.31 & 85.69 & 73.18 & 89.38 & 72.38 & 90.91 & 80.03 & 83.00 & \colorbox{lightorange}{80.74}  \\
DoRA$_{r=1}$        & 2.56M & Q,K,V,U,D & 65.4 & 71.05 & 85.29 & 73.25 & 90.09 & 73.32 & 90.74 & 79.75 & 81.87 & \colorbox{lightorange}{80.67} \\
\rowcolor{lightgreen}
PSOFT$_{r=176}$   & 2.52M & Q,K,V,U,D & 55.0 & 71.47 & 86.02 & 75.33 & 90.81 & 72.69 & 90.45 & 78.75 & 84.00 & \colorbox{lightorange}{\textbf{81.19}} \\
\bottomrule
\end{tabular}
}
\end{small}
\end{center}
\end{table}

\subsection{Implementation Details}
The experiments are conducted following the frameworks of \citet{hu2023llm, liu2024dora}, implemented in \texttt{PyTorch}~\citep{paszke2019pytorch} with HuggingFace’s \texttt{PEFT} library~\citep{peft}.  
Consistent with \citet{lingam2024svft}, we tune only the learning rates for different models.  
Detailed hyperparameter configurations are provided in Table~\ref{appendix-tab:hyper-common}.  

As shown in Table~\ref{appendix-tab:common}, when fine-tuning the Q and V modules, PSOFT avoids the OOM failures observed in GOFT and qGOFT, and surpasses BOFT by 0.33\%/0.57\% in average accuracy while using only 66\% of its peak memory.  
We further evaluate under more constrained parameter budgets, where PSOFT continues to deliver superior average accuracy across eight commonsense reasoning benchmarks.  
In terms of memory efficiency, PSOFT requires only about 80\% of the memory of DoRA and SVFT, while remaining comparable to LoRA.  

%------------------------------------------------------------------------

\section{Extension Experiments}
\label{appendix:extension}
\subsection{Effect of SVD initialization}

\begin{table}[htbp]
\caption{The effect of SVD Initialization on the Commonsense-15K Dataset using the LLaMA-3.2-3B model.}
\label{appendix:tab-extension-init}
\begin{center}
\begin{small}
\begin{tabular}{cccc}
\toprule
\textbf{Methods} &\textbf{ SVD \(n\_iter\)} & \textbf{SVD Init Time} & \textbf{Validation Loss} \\
\midrule
\multirow{4}{*}{PSOFT$_{r=32}$}  & 5         & 2.79  & 0.9343 \\
                                & 10        & 3.74  & 0.9328 \\
                                & 20        & 4.84  & 0.9283 \\
                                & $\infty$  & 89.68 & 0.9276 \\
\midrule                               
\multirow{4}{*}{PSOFT$_{r=64}$}  & 5         & 4.11  & 0.9174 \\
                                & 10        & 5.13  & 0.9134 \\
                                & 20        & 7.51  & 0.9157 \\
                                & $\infty$  & 89.48 & 0.9147 \\
\midrule
\multirow{4}{*}{PSOFT$_{r=128}$} & 5         & 6.33  & 0.9092 \\
                                & 10        & 8.38  & 0.9028 \\
                                & 20        & 13.01 & 0.9029 \\
                                & $\infty$  & 90.50 & 0.8992 \\
\bottomrule
\end{tabular}
\end{small}
\end{center}
\end{table}

PSOFT constructs the principal subspace via SVD, where the initialization time and accuracy of fast SVD depend on the \texttt{n\_iter} parameter~\citep{halko2011finding, meng2024pissa}.  
We evaluate this on the Commonsense-15K dataset~\citep{hu2023llm} using the LLaMA-3.2-3B model~\citep{meta2024llama3.2}, reporting both initialization time and validation loss.  
As shown in Table~\ref{appendix:tab-extension-init}, smaller \texttt{n\_iter} values yield faster initialization, while larger values improve accuracy.  
With \texttt{n\_iter} = 20, the loss is nearly identical to that of full SVD (\texttt{n\_iter} → \(\infty\)).  
These results show that fast SVD initializes PSOFT within seconds, and even full SVD introduces negligible overhead relative to the total fine-tuning time.  

\begin{table}[htbp]
\caption{Effects of different ranks fine-tuned on the CoLA Dataset using the DeBERTA-V3-base model (on a single RTX5090).}
\label{appendix:tab-extension-ranks-deberta}
\begin{center}
\begin{small}
\begin{tabular}{cccccc}
\toprule
\textbf{Methods} & \textbf{Ranks} & \textbf{\#Params} & \makecell{\textbf{Matthew’s}\\{\textbf{Correlation}(\%)}} & \makecell{{\textbf{Peak GPU}}\\ {\textbf{Memory (GB)}}} & \textbf{Runtime} \\

\midrule
\multirow{10}{*}{PSOFT}  & 1     & 144       & 59.20 & 4.0 & 17m34s \\
                        & 2     & 360       & 68.80 & 4.0 & 18m32s \\
                        & 4     & 1,008     & 70.08 & 4.0 & 19m17s \\
                        & 8     & 3,168     & 70.93 & 4.0 & 19m08s \\
                        & 16    & 10,944    & 68.36 & 4.0 & 19m32s \\
                        & 32    & 40,320    & 72.09 & 4.0 & 19m41s \\
                        & 64    & 154,368   & 69.16 & 4.1 & 21m29s \\
                        & 128   & 603,648   & 72.46 & 4.2 & 20m42s \\
                        & 256   & 2,386,944 & 74.09 & 4.6 & 24m35s \\
                        & 512   & 9,492,480 & 71.04 & 5.8 & 27m20s \\
\bottomrule
\end{tabular}
\end{small}
\end{center}
\end{table}

\begin{table}[htbp]
\caption{Effects of different ranks fine-tuned on the Commonsense-15K Dataset using the LLaMA-3.2-3B model (on a single H100).}
\label{appendix:tab-extension-ranks-llama3}
\begin{center}
\begin{small}
\begin{tabular}{cccccc}
\toprule
\textbf{Methods} & \textbf{Ranks} & \textbf{\#Params} & \textbf{ Avg. (\%)} & \makecell{{\textbf{Peak GPU}}\\ {\textbf{Memory (GB)}}}  & \textbf{Runtime} \\

\midrule
\multirow{10}{*}{PSOFT}  & 1     & 392       & 27.07 & 31.5 & 50m13s \\
                        & 2     & 980       & 32.45 & 31.5 & 46m37s \\
                        & 4     & 2,744     & 36.16 & 31.5 & 48m30s \\
                        & 8     & 8,624     & 38.21 & 31.5 & 46m18s \\
                        & 16    & 29,792    & 57.12 & 31.6 & 48m52s \\
                        & 32    & 109,760   & 62.94 & 31.8 & 51m12s \\
                        & 64    & 420,244   & 70.95 & 32.1 & 48m47s \\
                        & 128   & 1,643,264 & 73.90 & 32.8 & 46m11s \\
                        & 256   & 6,497,792 & 74.95 & 34.5 & 47m29s \\
                        & 512   & 25,840,640 & 75.05 & 38.4 & 49m49s \\
\bottomrule
\end{tabular}
\end{small}
\end{center}
\end{table}

\subsection{Effect of ranks}
To provide guidance on rank selection, we evaluate PSOFT with ranks ranging from 1 to 512 on the CoLA and the Commonsense-15K dataset~\citep{hu2023llm} using DeBERTA-V3-base~\citep{he2021debertav3} and LLaMA-3.2-3B~\citep{meta2024llama3.2}.
As shown in Table~\ref{appendix:tab-extension-ranks-deberta} and Table~\ref{appendix:tab-extension-ranks-llama3}, PSOFT exhibits a wide range of usable ranks: as $r$ increases, the number of trainable parameters grows according to the formula in~\ref{app:tab-params}, $r(r-1)/2 + 2r$, and performance improves correspondingly, though with diminishing returns. Memory usage increases with $r$, but remains nearly flat when $r$ is small. Since we adopt the truncated Neumann-series approximation, training time does not increase noticeably with larger $r$.

The results further reveal a consistent pattern across models and tasks. For smaller models and simpler tasks, PSOFT is highly parameter-efficient: even very small ranks achieve strong performance, indicating that the low-dimensional subspace is already sufficient to capture the necessary task-specific transformations. In contrast, for larger models and more complex tasks, performance tends to increase with larger ranks, reflecting the greater capacity required to capture task-specific transformations. In such cases, the main trade-off is between the performance gains from increasing $r$ and the corresponding growth in trainable parameters.

Based on these observations, we provide the following practical guidance for choosing the rank. For simpler tasks, we recommend using small to moderate ranks (\emph{e.g.,}  32-128), as they  provide good parameter efficiency with little performance loss. For more complex tasks, larger ranks generally lead to higher performance, while extremely small ranks (\emph{e.g.,} below 16) may hurt results. In such cases, moderate to large ranks (\emph{e.g.,}  64-256) offer a better balance between performance and efficiency.

\subsection{Effect of Inserted Modules}
We fine-tune LLaMA-3.2-3B with PSOFT and evaluate it on GSM-8K under different insertion schemes, with results shown in Figure~\ref{appendix:fig-inserted-modules}.  
Overall, performance improves as more modules are inserted and as the rank $r$ increases, showing that complex mathematical tasks benefit directly from higher model capacity under PSOFT.  
For a fixed rank $r$, applying PSOFT to the $Q, K, V, U,$ and $D$ modules generally provides the best trade-off between performance and parameter efficiency.  
When the parameter budget permits, inserting PSOFT into all linear layers yields the strongest results.  

\begin{figure}[ht]
\centering
\begin{subfigure}[b]{0.49\linewidth}  
\centering
\includegraphics[width=\linewidth]{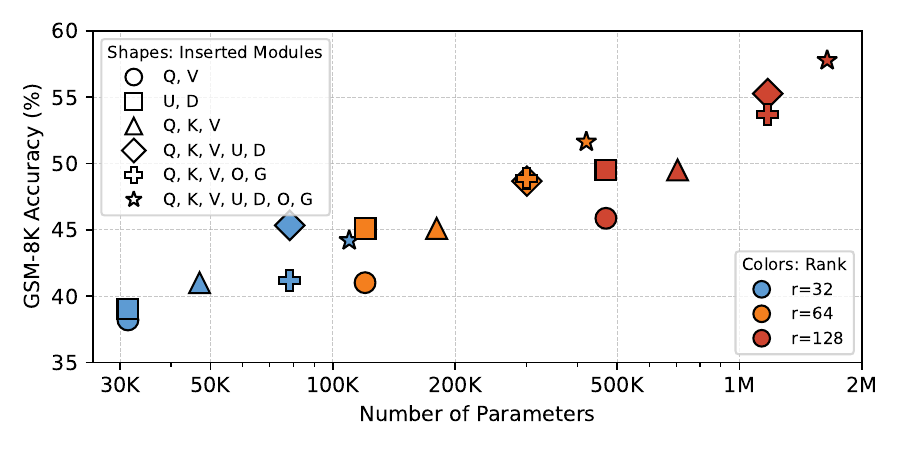}
\caption{}
\label{appendix:fig-inserted-modules}
\end{subfigure}
\hfill
\begin{subfigure}[b]{0.48\linewidth}  
\centering
\includegraphics[width=\linewidth]{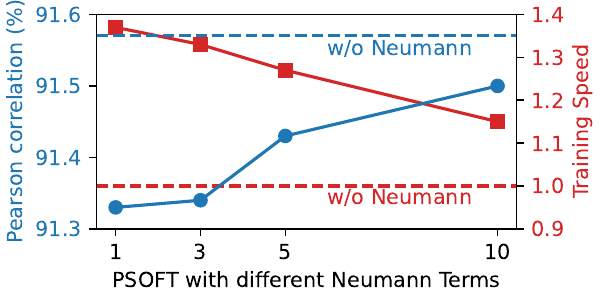}
\caption{}
\label{appendix:fig-neu}
\end{subfigure}
\caption{(a) Effect of inserted modules on GSM-8K using LLaMA-3.2-3B. (b) Effect of Neumann terms on STS-B using DeBERTaV3-base.}
\label{appendix:fig-extension}
\end{figure}

\subsection{Effect of Neumann terms}  
To assess the effect of different Neumann terms on training speed and performance, we fine-tune DeBERTaV3-base on STS-B with rank 46.  
As shown in Figure~\ref{appendix:fig-neu}, the Neumann series approximation substantially accelerates training while maintaining performance close to the original Cayley parameterization.  
Training speed decreases as the number of terms increases, gradually approaching that of Cayley, whereas performance improves with more terms and eventually converges to the Cayley result.  

\section{Pairwise Angles of Weights}
\label{appendix:angles}
We fine-tune DeBERTa-V3-base on the CoLA dataset using the same setup as in the main paper. 
We then extract the \textit{query} matrix from \textit{layer 6} and compute the pairwise angles among the first eight column vectors of $\mW_{\text{pri}}$ and $\mW_{\text{pre}}$, as well as those of $\mW_{\text{ps-tuned}}$ and $\mW_{\text{final}}=\mW_{\text{ps-tuned}}+\mW_{\text{res}}$.
Figures ~\ref{app:fig-weights-wpri} and \ref{app:fig-weights-wpre} show that, before fine-tuning, the angles in $\mW_{\text{pri}}$ and $\mW_{\text{pre}}$ follow a clear and stable pattern. 
Figures~\ref{app:fig-weights-wps-so} and ~\ref{app:fig-weights-wfinal-so} show that PSOFT with strict orthogonality keeps this pattern: $\mW_{\text{ps-tuned}}$ preserves the angles in $\mW_{\text{pri}}$, and $\mW_{\text{final}}$ preserves those in $\mW_{\text{pre}}$. 
As shown in Figures~\ref{app:fig-weights-wfinal-so} and \ref{app:fig-weights-wfinal-ro}, PSOFT with relaxed orthogonality also keeps the main angular structure, but introduces small and controlled changes. These changes help improve task adaptation while keeping the key structure intact.

\begin{figure}[ht]
\centering
\begin{subfigure}{0.32\columnwidth}
    \centering
    \includegraphics[width=\linewidth]{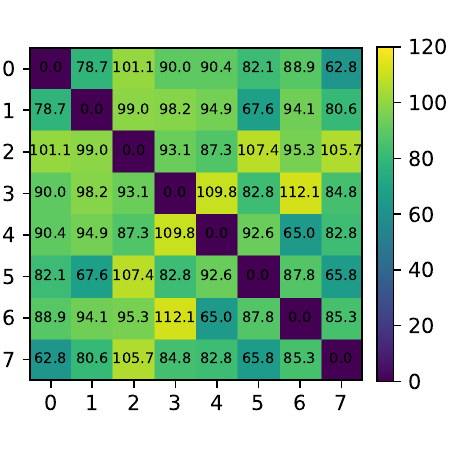}
    \caption{$W_\text{pri}$}
    \label{app:fig-weights-wpri}
\end{subfigure}
\hfill
\begin{subfigure}{0.32\columnwidth}
    \centering
    \includegraphics[width=\linewidth]{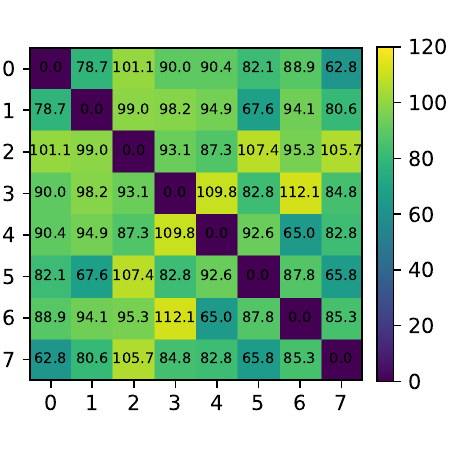}
    \caption{$W_\text{ps-tuned}$ (strict orth.)}
    \label{app:fig-weights-wps-so}
\end{subfigure}
\hfill
\begin{subfigure}{0.32\columnwidth}
    \centering
    \includegraphics[width=\linewidth]{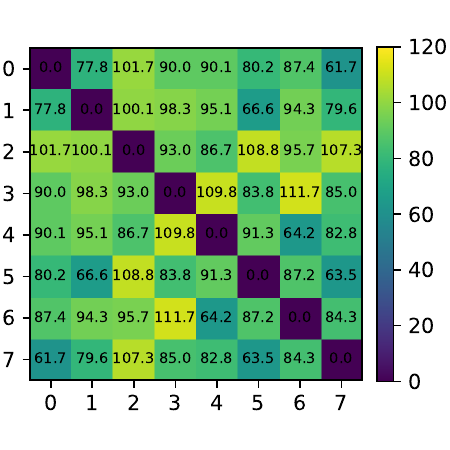}
    \caption{$W_\text{ps-tuned}$ (relaxed orth.)}
    \label{app:fig-weights-wps-ro}
\end{subfigure}

\caption{Angle structures of $W_\text{pri}$ (the query matrix in layer 6) before fine-tuning (a), and of $W_{\text{ps-tuned}}$ after PSOFT fine-tuning under strict (b) and relaxed (c) orthogonality.}
\label{app:fig-weights-wpri_all}
\end{figure}

\begin{figure}[ht]
\centering

\begin{subfigure}{0.32\columnwidth}
    \centering
    \includegraphics[width=\linewidth]{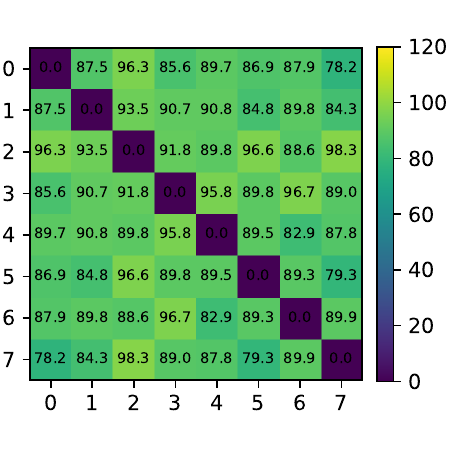}
    \caption{$W_\text{pre}$}
    \label{app:fig-weights-wpre}
\end{subfigure}
\hfill
\begin{subfigure}{0.32\columnwidth}
    \centering
    \includegraphics[width=\linewidth]{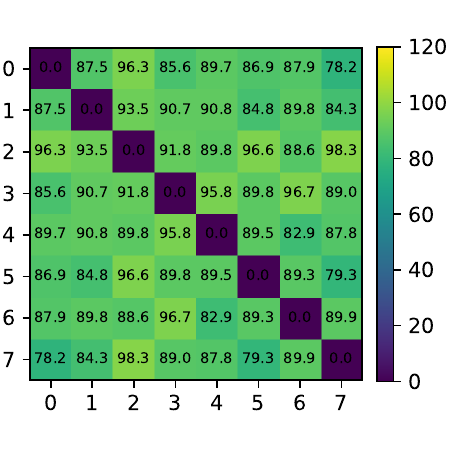}
    \caption{$W_\text{final}$ (strict orth.)}
    \label{app:fig-weights-wfinal-so}
\end{subfigure}
\hfill
\begin{subfigure}{0.32\columnwidth}
    \centering
    \includegraphics[width=\linewidth]{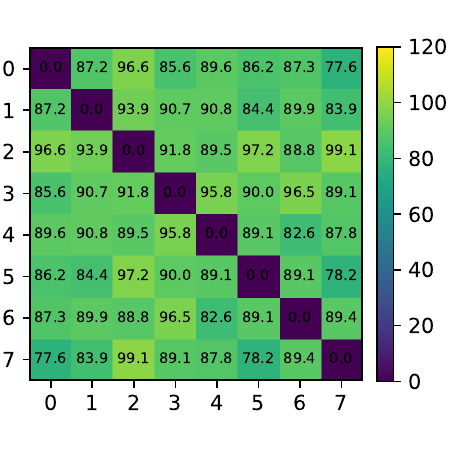}
    \caption{$W_\text{final}$ (relaxed orth.)}
    \label{app:fig-weights-wfinal-ro}
\end{subfigure}

\caption{Angle structures of $W_\text{pre}$ (the query matrix in layer 6) before fine-tuning (a), and of $W_{\text{final}}$ after PSOFT fine-tuning under strict (b) and relaxed (c) orthogonality.}
\label{app:fig-weights-wpre_all}
\end{figure}

\section{Loss and Convergence Comparison}
\label{appendix:loss}
PSOFT can be viewed as a specialized form of orthogonal fine-tuning, where $\mW_{\text{final}} = \mR_{\text{full}}\, \mW_{\text{pre}}$, with $\mR_{\text{full}} = \operatorname{diag}(\mR,\mI_{d-r})$, meaning that the orthogonal transformation is applied only to the principal (low-rank) subspace of the pre-trained weight matrix, while an identity mapping is imposed on its orthogonal complement. This formulation implies that the optimization behavior of PSOFT gradually approaches that of full-space OFT methods as the rank~$r$ increases.

Therefore, PSOFT induces a principled modification of the optimization geometry: Full-space OFT optimizes over the Stiefel manifold $\mathrm{St}(d,d)$, whose tangent space consists of all skew-symmetric directions in the full $d$-dimensional parameter space. In contrast, PSOFT restricts optimization to the tangent space of a block-diagonal submanifold $\mathrm{St}(r,r) \oplus \mathbb{R}^{(d-r)}$. As a result, only the principal subspace receives curvature-aware updates, while the orthogonal complement experiences zero curvature (identity block).

\begin{figure}[ht]
\centering
\includegraphics[width=0.8\columnwidth]{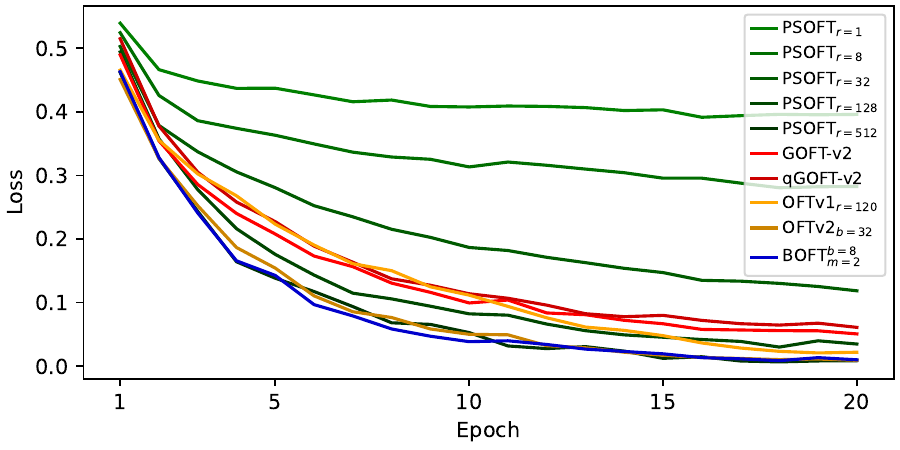}
\caption{Comparison of loss curves for different PSOFT ranks and various orthogonal fine-tuning methods.}
\label{app:fig-loss}
\end{figure}

Building upon this geometric distinction, PSOFT exhibits three complementary behaviors that characterize its optimization dynamics. First, the low-rank orthogonal constraint simplifies the optimization landscape by preventing large full-space orthogonal transformations. This restriction reduces the effective curvature of the optimization path, yielding more stable and predictable gradient updates, while at the same time limiting expressiveness when $r$ is very small. Second, because PSOFT applies orthogonal transformations only within the principal subspace, stochastic noise is confined to this lower-dimensional region rather than being amplified across all $d$ dimensions as in full-space OFT, leading to more robust and less destructive updates. Third, as $r$ increases, the PSOFT tangent space increasingly approximates that of full-space OFT, supporting richer expressiveness and convergence trajectories that gradually approach full-space OFT, yet without the severe overfitting that may arise in full-space OFT. Collectively, these properties illustrate how PSOFT navigates the trade-off between stability, expressiveness, and generalization.

We conduct additional experiments on the CoLA dataset using DeBERTa-V3 and report the training loss curves of different OFT variants. As shown in Figure~\ref{app:fig-loss}, the green curves correspond to PSOFT, with darker colors indicating larger ranks. We observe that as $r$ increases, the PSOFT loss curves progressively approach those of full-space OFT methods such as BOFT and OFTv2, reflecting the improved convergence speed and expressiveness of higher-rank subspaces. PSOFT with very small ranks constrains the update space too aggressively, which may lead to underfitting and slower loss reduction. In contrast, full-space OFT methods such as BOFT display the fastest initial convergence, but their full-rank orthogonal updates raise the risk of overfitting. This phenomenon is evident in our main GLUE experiments, where BOFT achieves the lowest training loss yet fails to obtain the best generalization performance.

These trends are consistent with the geometric properties of PSOFT discussed above: by constraining orthogonal updates to a lower-dimensional principal subspace, PSOFT naturally balances expressiveness and generalization. Unlike full-space OFT, PSOFT enables explicit capacity control through $r$, allowing moderate ranks to achieve a more favorable bias-variance trade-off and stronger generalization.

\section{Additional Experiments on Memory Usage}
\label{appendix:memory_usage}
we additionally conducted memory experiments on a single NVIDIA H100 80GB, covering:

\begin{itemize}[itemsep=1pt,topsep=1pt]
    \item the forward/backward (FP/BP) peak memory usage on a single custom linear layer, and
    \item the forward/backward (FP/BP) peak memory usage on a Transformer block, and 
    \item the peak memory usage on the DeBERTaV3-base and ViT-B/16 models during training.
\end{itemize}
For the single-layer analysis, we implemented a Python-based evaluation framework that separately measures peak memory usage and runtime for the forward and backward passes. The implementation of GOFTv2 uses the latest available code, while BOFT is taken from the PEFT library (version 0.17.0). We track peak memory consumption (in GB) and runtime (in milliseconds, ms), as peak memory is the primary factor limiting on memory-constrained hardware. The linear layer input is configured with a batch size $b=64$, sequence length $s=512$, and hidden dimension $h=4096$. Runtime results are averaged over 100 forward/backward runs. The results are summarized as follows:

\begin{table}[ht]
\caption{Peak memory usage (GB) and runtime (ms) statistics for different methods on a single custom linear layer.}
\label{app:tab-memory-linear}
\begin{center}
\begin{small}
\begin{tabular}{cccccc}
\toprule
\textbf{Methods} & \textbf{Peak Memory (FP)} & \textbf{Peak Memory (BP)} & \textbf{Runtime (FP)} & \textbf{Runtime (BP)} \\
\midrule
GOFTv2              & 13.6  & 14.3  & 5.2   & 129.3 \\
qGOFTv2             & 13.6  & 14.3  & 5.4   & 129.6 \\
BOFT$^{b=8}_{m=2}$  & 1.8   & 2.6   & 102.9 & 2.1 \\
BOFT$^{b=4}_{m=4}$  & 2.3   & 3.0   & 139.6 & 2.5 \\
PSOFT$_{r=32}$      & 2.1   & 2.6   & 43.4  & 4.3 \\
PSOFT$_{r=64}$      & 2.1   & 2.6   & 43.8  & 4.8 \\
PSOFT$_{r=128}$     & 2.1   & 2.6   & 22.9  & 25.9 \\
PSOFT$_{r=256}$     & 2.2   & 2.6   & 4.0   & 48.8 \\
PSOFT$_{r=512}$     & 2.2   & 2.7   & 5.6   & 53.1 \\
\bottomrule
\end{tabular}
\end{small}
\end{center}
\end{table}

As shown in \ref{app:tab-memory-linear}, although GOFTv2 benefits from the Hadamard-product optimization and achieves reduced forward-pass computation time, it still consumes substantially more activation memory than both BOFT and PSOFT. Importantly, the single-layer activation-memory measurement slightly underrepresents PSOFT’s true advantage: as discussed in the theoretical analysis, PSOFT reduces activation memory across multiple layers, but when evaluating a single layer in isolation, it should still store the full input and output activations, which partially diminishes its advantage. Nevertheless, even under this conservative setting, PSOFT achieves lower activation-memory usage and faster computation compared with BOFT and GOFTv2, and its advantages become increasingly pronounced when moving from a single linear layer to a Transformer block or end-to-end models.

\begin{table}[ht]
\caption{Peak memory usage (GB) and runtime (ms) statistics for different methods on a Transformer block.}
\label{app:tab-memory-transformer}
\begin{center}
\begin{small}
\begin{tabular}{cccccc}
\toprule
\textbf{Methods} & \textbf{Peak Memory (FP)} & \textbf{Peak Memory (BP)} & \textbf{Runtime (FP)} & \textbf{Runtime (BP)} \\
\midrule
GOFTv2              & 65.4  & 65.4  & 49.5   & 667.1 \\
qGOFTv2             & 65.4  & 65.4  & 49.5   & 671.2 \\
BOFT$^{b=8}_{m=2}$  & 19.0  & 19.0  & 2813.9 & 7.5 \\
BOFT$^{b=4}_{m=4}$  & 28.9  & 28.9  & 5427.9 & 8.7 \\
PSOFT$_{r=32}$      & 7.2   & 7.2   & 162.7  & 134.4 \\
PSOFT$_{r=64}$      & 7.2   & 7.2   & 166.0  & 134.2 \\
PSOFT$_{r=128}$     & 7.2   & 7.3   & 137.4  & 170.3 \\
PSOFT$_{r=256}$     & 7.3   & 7.4   & 122.2  & 197.7 \\
PSOFT$_{r=512}$     & 7.6   & 7.6   & 130.3  & 215.3 \\
\bottomrule
\end{tabular}
\end{small}
\end{center}
\end{table}

To validate this, we extend the single-layer setup to a complete Transformer block, configured with 8 attention heads and with all PEFT modules inserted into all linear layers. The input is configured with a batch size $b=32$, sequence length $s=512$, and hidden dimension $h=4096$, and runtime results are averaged over 100 forward and backward runs. We report peak memory consumption (in GB) and runtime (in milliseconds, ms). As shown in \ref{app:tab-memory-transformer}, these block-level experiments confirm that PSOFT further reduces both memory usage and runtime by avoiding full-dimensional chained multiplications and performing orthogonal transformations only within a much smaller subspace.

We then conduct full-layer experiments following the same configuration as in the main paper. For DeBERTaV3-base, we use a fixed batch size $b=64$ and and task-dependent sequence length $s \in {64,128,256}$. For ViT-B/16, we follow the original setup with a fixed sequence length $s=197$ and a batch size of $b=64$. Additionally, we include results with smaller batch sizes $b \in {16,32}$ for a more comprehensive comparison. PSOFT uses the same rank $r=46$ as reported in the original paper, and all PEFT modules are inserted into all linear layers. The results are presented as follows:
\begin{table}[ht]
\caption{Peak memory usage (GB) of different methods on DeBERTaV3-base.}
\label{app:tab-memory-deberta}
\begin{center}
\begin{small}
\begin{tabular}{cccccc}
\toprule
\textbf{Methods} & \textbf{Peak Memory (s=64)} & \textbf{Peak Memory (s=128)} & \textbf{Peak Memory (s=256)} \\
\midrule
GOFTv2              & 18.5  & 34.4  & 67.5 \\
qGOFTv2             & 18.5  & 34.4  & 67.5 \\
BOFT$^{b=8}_{m=2}$  & 6.3   & 9.4   & 17.5 \\
PSOFT$_{r=46}$      & 4.1   & 6.8   & 14.0 \\
\bottomrule
\end{tabular}
\end{small}
\end{center}
\end{table}

\begin{table}[ht]
\caption{Peak memory usage (GB) of different methods on ViT-B/16.}
\label{app:tab-memory-vit}
\begin{center}
\begin{small}
\begin{tabular}{cccccc}
\toprule
\textbf{Methods} & \textbf{Peak Memory (b=16)} & \textbf{Peak Memory (b=32)} & \textbf{Peak Memory (b=64)} \\
\midrule
GOFTv2              & 22.5  & 44.7  & OOM \\
qGOFTv2             & 22.5  & 44.7  & OOM \\
BOFT$_{m=2}^{b=8}$  & 5.4   & 7.3   & 10.9 \\
PSOFT$_{r=46}$      & 2.4   & 2.9   & 6.2 \\
\bottomrule
\end{tabular}
\end{small}
\end{center}
\end{table}

As shown in \ref{app:tab-memory-deberta} and \ref{app:tab-memory-vit}, PSOFT achieves the lowest peak memory usage across different settings. Remarkably, even on an H100 GPU, GOFT still encounters OOM failures for ViT-B/16 with a batch size $b=64$. This behavior stems from its activation-memory scaling of $\mathcal{O}(bsh \log h)$, which grows rapidly at larger batch sizes and ultimately limits its applicability on memory-constrained hardware. In contrast, PSOFT consistently avoids such OOM issues: by restricting OFT to the principal subspace, it preserves the essential semantic representations while simultaneously improving multi-dimensional efficiency (parameter counts, memory, and computation) for OFT.

%------------------------------------------------------------------------

\section{The Use of Large Language Models (LLMs)}
\label{appendix:llm_usage}
In this work, large language models (LLMs) are used solely as general-purpose tools to assist with writing polish. Specifically, LLMs are employed to refine grammar, improve readability, and ensure that the overall writing style conforms to academic conventions. LLMs are not involved in research ideation, experimental design, data analysis, or conclusion formulation. All technical contributions, theoretical analyses, and experimental results are entirely original work by the authors.

\end{document}